\newtheorem{proposition}{Proposition}
\newtheorem*{proposition*}{Proposition}
\title{Tree-Based Stochastic Optimization for Solving Large-Scale Urban Network Security Games}
\author{
    %Authors
    % All authors must be in the same font size and format.
    Shuxin Zhuang\textsuperscript{\rm 1,2}\equalcontrib,
    Linjian Meng\textsuperscript{\rm 3}\equalcontrib,
    Shuxin Li\textsuperscript{\rm 4},
    Minming Li\textsuperscript{\rm 1}\footnotemark[2],
    Youzhi Zhang\textsuperscript{\rm 2}\thanks{Corresponding authors.}
}
\title{My Publication Title --- Single Author}
\author {
    Author Name
}
\title{My Publication Title --- Multiple Authors}
\author {
    % Authors
    First Author Name\textsuperscript{\rm 1},
    Second Author Name\textsuperscript{\rm 2},
    Third Author Name\textsuperscript{\rm 1}
}
\begin{document}

\maketitle

\begin{abstract}

Urban Network Security Games (UNSGs), which model the strategic allocation of limited security resources on city road networks, are critical for urban safety. However, finding a Nash Equilibrium (NE) in large-scale UNSGs is challenging due to their massive and combinatorial action spaces. One common approach to addressing these games is the Policy-Space Response Oracle (PSRO) framework, which requires computing best responses (BR) at each iteration. However, precisely computing exact BRs is impractical in large-scale games, and employing reinforcement learning to approximate BRs inevitably introduces errors, which limits the overall effectiveness of the PSRO methods. Recent advancements in leveraging non-convex stochastic optimization to approximate an NE offer a promising alternative to the burdensome BR computation. However, utilizing existing stochastic optimization techniques with an unbiased loss function for UNSGs remains challenging because the action spaces are too vast to be effectively represented by neural networks. To address these issues, we introduce \textbf{T}ree-based \textbf{S}tochastic \textbf{O}ptimization (TSO), a framework that bridges the gap between the stochastic optimization paradigm for NE-finding and the demands of UNSGs. Specifically, we employ the tree-based action representation that maps the whole action space onto a tree structure, addressing the challenge faced by neural networks in representing actions when the action space cannot be enumerated. We then incorporate this representation into the loss function and theoretically demonstrate its equivalence to the unbiased loss function. To further enhance the quality of the converged solution, we introduce a sample-and-prune mechanism that reduces the risk of being trapped in suboptimal local optima. Extensive experimental results indicate the superiority of TSO over other baseline algorithms in addressing the UNSGs.
\end{abstract}

% Uncomment the following to link to your code, datasets, an extended version or similar.
% You must keep this block between (not within) the abstract and the main body of the paper.
% \begin{links}
%     \link{Code}{https://aaai.org/example/code}
%     \link{Datasets}{https://aaai.org/example/datasets}
%     \link{Extended version}{https://aaai.org/example/extended-version}
% \end{links} 

%%%%%%%
% \small
% \setlength\abovedisplayskip{2pt}
% \setlength\belowdisplayskip{2pt}
% \thinmuskip=0mu
% \medmuskip=0mu
% \thickmuskip=0mu
% \spaceskip=-0pt

\section{Introduction}

The strategic allocation of limited security resources is crucial for public safety, with applications focused on preventing, deterring, and detecting illicit activities to enhance urban security \citep{tsai2010urban,jain2011double,Jain2013,iwashita2016simplifying,zhang2017optimal,zhang2019optimal,li2024grasper}. Many real-world security challenges, including the protection of critical infrastructure \cite{jain2011double} and the interdiction of urban criminals \cite{zhang2017optimal, zhang2019optimal, xue2021solving}, can be effectively modeled as UNSGs. In these games, a team of defender strategically deploys resources, such as police patrols or security checkpoints, across a city's road network to interdict the adversary. The adversary, in turn, aims to traverse the network from a source to a destination while minimizing the probability of being caught. However, a fundamental challenge in solving UNSGs lies in their computational complexity. As urban networks scale, the exponential growth of possible adversary paths and defender resource allocations leads to massive action spaces \cite{jain2011double}, making exact solutions computationally intractable.

A prominent line of research for solving UNSGs relies on iterative and oracle-based algorithms to approximate a Nash equilibrium (NE) \cite{nash1951non,mcmahan2003planning,jain2011double}. A foundational method in this domain is the double-oracle algorithm \cite{mcmahan2003planning,jain2011double,zhang2017optimal,zhang2019optimal}. It begins with a small, restricted set of pure strategies for each player. In each iteration, it computes an equilibrium for this restricted game and then expands the strategy sets by adding each player's best response (BR) to the opponent's current strategy. This process continues until convergence. While the classic double-oracle algorithm relies on linear programming, it loses its effectiveness in large-scale UNSGs. To overcome this limitation, PSRO \citep{lanctot2017unified} was introduced as a generalization of the double-oracle framework, embedding deep reinforcement learning (DRL) to handle immense strategy spaces, and it inherits the convergence guarantees of the double-oracle method. However, a critical bottleneck shared by these approaches is their dependence on best-response computation. Finding a best response in UNSGs is an NP-hard problem \cite{jain2011double}. Indeed, recent work \cite{xue2021solving,xue2022nsgzero,li2023solving, li2024grasper,zhuang2025solving} employs DRL to approximate the BR oracle for solving UNSGs, but these RL-based oracles often suffer from low accuracy. The resulting imprecise best responses hamper the effectiveness of PSRO framework, and lead to convergence towards suboptimal policies.

In parallel, another line of research \cite{gemp2021sample, gemp2024approximating, mengreducing}, primarily developed for normal-form games, reframes the NE-finding problem as a non-convex stochastic optimization task. In this paradigm, a loss function is constructed whose minimization corresponds to finding an NE. Player policies, often parameterized as neural networks, are then updated directly via gradient descent using batches of data from sampled gameplay, thereby bypassing the need for an explicit BR oracle. While this paradigm is promising, its direct application to large-scale UNSGs remains challenging. Its core mechanism requires representing a player's policy as an explicit probability distribution over the entire action space to facilitate action sampling. Such an enumeration-based representation becomes intractable in typical UNSGs, where the action space is vast and combinatorial.

In this paper, we propose a novel framework, Tree-based Stochastic Optimization (TSO), to solve large-scale UNSGs. Our approach bridges the gap between the stochastic optimization paradigm for NE-finding and the practical demands of large-scale security games. First, we introduce a tree-based action sampling process where each action is mapped to a unique path in a decision tree. This structure decomposes the probability of selecting an action into a product of conditional probabilities at sequential decision points, thereby enabling efficient action sampling without enumerating the full action space. We then incorporate tree-based action representation into the Nash Advantage Loss (NAL) \cite{mengreducing} and theoretically demonstrate its equivalence to the unbiased loss function. Furthermore, to enhance the quality of the converged solution, we introduce a sample-and-prune mechanism. This technique diversifies exploration by pruning the initially sampled high-probability action and forcing a re-sample from the remaining action space, thereby mitigating the risk of premature convergence to a suboptimal local optimum. Extensive experiments demonstrate that TSO significantly outperforms established baselines in UNSGs with both small and large action spaces.

\section{Preliminaries}

\subsection{Game Definition}

UNSGs involve a defender strategically placing security resources on city roads to protect against an adversary who selects a path through the city. Following the game definition from \cite{jain2011double,tsai2010urban}, we model the UNSGs on a graph $G = (V, E)$ as a one-shot and simultaneous-move game between an attacker and a team of $N$ defenders. The attacker's action, $a_\text{attacker}$, is a simple path from a start vertex $s \in V_{\text{start}}$ to a target vertex $t \in V_{\text{target}}$, with the action space $\mathcal{A}_\text{attacker}$ being the set of all such paths. Each defender $m\in\{1, \dots, N\}$ selects an edge $e_m$ from their individual action space $\mathcal{E}_m \subseteq E$. The team's action $a_\text{defender}$ is a tuple of these selected edges, and their action space is the Cartesian product $\mathcal{A}_\text{defender} = \times_{m=1}^{N} \mathcal{E}_m$.

Player utilities are determined by their joint actions. A function $U: V_{\text{target}} \to \mathbb{R}^+$ assigns a positive value $U(v_{t_i})$ to each target $v_{t_i}$, and we use target($a_\text{attacker}$) to denote the endpoint of the attacker's chosen path. Interception occurs if the attacker's path $a_\text{attacker}$ includes any edge from the defenders' action $a_\text{defender}$. A successful attacker gains utility equal to the target's value, $U(\text{target}(a_\text{attacker}))$, while an intercepted attacker incurs a penalty of the same magnitude.
Formally, the attacker's utility is defined as:
\begin{equation}
\begin{aligned}
& u_\text{attacker}(a_\text{attacker}, a_\text{defender}) \\
& \qquad = 
\begin{cases}
U(\text{target}(a_\text{attacker})) & \text{if } a_\text{attacker} \cap a_\text{defender} = \emptyset \\
-U(\text{target}(a_\text{attacker})) & \text{if } a_\text{attacker} \cap a_\text{defender} \neq \emptyset.
\end{cases}
\end{aligned}    
\end{equation}

This game can be modeled as a zero-sum normal-form game (NFG), where the defender's utility is the negative of the attacker's utility: $u_\text{defender}(a_\text{attacker}, a_\text{defender}) = -u_\text{attacker}(a_\text{attacker}, a_\text{defender})$.

A mixed strategy for a player $i \in \mathcal{P}$, where $\mathcal{P} = \{\text{attacker}, \text{defender}\}$, denoted by $\bm{x}_i$, is a probability distribution over their pure action set $\mathcal{A}_i$. The space of all valid mixed strategies for player $i$ is the probability simplex $\bm{\mathcal{X}}_i = \{ \bm{x}_i \in \mathbb{R}^{|\mathcal{A}_i|} \mid \sum_{a_i \in \mathcal{A}_i} x_i(a_i) = 1, \forall a_i, x_i(a_i) \ge 0 \}$. A strategy profile $\bm{x} = (\bm{x}_i)_{i \in \mathcal{P}}$ for the game is an element of the joint strategy space $\bm{\mathcal{X}} = \times_{i \in \mathcal{P}} \bm{\mathcal{X}}_i$. We denote the interior of this space as $\bm{\mathcal{X}}^\circ$, where every pure strategy is played with a strictly positive probability. Given a strategy profile $\bm{x}$, the expected utility for player $i$ is given by $u_i(\bm{x}) = u_i(\bm{x}_i, \bm{x}_{-i})$, where $\bm{x}_{-i}$ denotes the strategies of all players except $i$. The expected utility for player $i$ is:
\begin{equation}
% \small
\setlength\abovedisplayskip{2pt}
\setlength\belowdisplayskip{2pt}
\thinmuskip=0mu
\medmuskip=0mu
\thickmuskip=0mu
\spaceskip=-0pt
u_i(\bm{x}_i, \bm{x}_{-i}) = \sum_{\bm{a} \in \times_{i \in \mathcal{P}} \mathcal{A}_i} u_i(\bm{a}) \prod_{j \in \mathcal{P}} \bm{x}_j(a_j).
\end{equation}

Our primary objective is to compute an NE, a strategy profile $\bm{x}^*$ from which no player has a unilateral incentive to deviate. As analyzed in \cite{facchinei2003finite}, if the utility function of each player $i$ is concave over $\bm{\mathcal{X}}_i$, an NE $\bm{x}$ is such that $\langle \nabla_{x_i} u_i(\bm{x}), \bm{x}_i - \bm{x}_i' \rangle \leq 0, \ \forall i \in \mathcal{P} \text{ and } \bm{x} \in \bm{\mathcal{X}}$. For NFGs, the expected utility function $u_i(\bm{x})$ is linear with respect to a player's own strategy $\bm{x}_i$. This property allows us to quantify how close a strategy profile $\bm{x}$ is to an equilibrium using the duality gap, which measures the total exploitability of the profile:
\begin{equation}
% \small
\setlength\abovedisplayskip{2pt}
\setlength\belowdisplayskip{2pt}
\thinmuskip=0mu
\medmuskip=0mu
\thickmuskip=0mu
\spaceskip=-0pt
\mathrm{dg}(\boldsymbol{x}) = \sum_{i \in \mathcal{P}} \left( \max_{\boldsymbol{x}_i' \in \bm{\mathcal{X}}_i} u_i(\boldsymbol{x}_i', \boldsymbol{x}_{-i}) - u_i(\boldsymbol{x}_i, \boldsymbol{x}_{-i}) \right).    
\end{equation}
A strategy profile $\bm{x}$ is an NE if and only if its duality gap is zero. Thus, our goal is finding a strategy profile $\bm{x}^* \in \bm{\mathcal{X}}$ such that $\mathrm{dg}(\boldsymbol{x}^*) = 0$.

\subsection{Nash Equilibrium as Stochastic Optimization}

Many studies \citep{raghunathan2019game,goktasgenerative,marris2022turbocharging, gemp2021sample,  duan2023nash,liunfgtransformer,yongacoglu2024paths} treat computing NE as an optimization task. While various loss functions have been proposed for this purpose, many are not amenable to unbiased estimation. Recently, \citet{gemp2024approximating} and \citet{mengreducing} introduced methods based on unbiased estimation.

\citet{gemp2024approximating} leverages a property of concave games: for an interior strategy profile $\boldsymbol{x} \in \bm{\mathcal{X}}^{\circ}$, a player's utility gradients are identical across all actions if and only if $\boldsymbol{x}$ is an NE. To guarantee the existence of an interior NE, they introduce an entropy-regularized utility function for each player $i$: $u_{i}^{\tau}(\boldsymbol{x}) = u_{i}(\boldsymbol{x}) - \tau \boldsymbol{x}_{i}^{\top} \log \boldsymbol{x}_{i}$, where $\tau > 0$. Based on this insight, they proposed the first unbiased loss function $\mathcal{L}_{G}^{\tau}(\boldsymbol{x})$ for stochastic optimization.

% The equilibria of this regularized game are guaranteed to be interior. Based on this principle for the regularized game, \cite{gemp2024approximating} define the following loss function:
% $$\mathcal{L}_{G}^{\tau}(\boldsymbol{x}) = \sum_{i \in \mathcal{P}} \left\| \boldsymbol{F}_{i}^{\tau, \boldsymbol{x}} - \overline{\boldsymbol{F}}_{i}^{\tau, \boldsymbol{x}} \right\|_{2}^{2}$$
% where $\boldsymbol{F}_{i}^{\tau, \boldsymbol{x}} = -\nabla_{\bm{x}_{i}} u_{i}^{\tau}(\boldsymbol{x}) = -\nabla_{\bm{x}_{i}} u_{i}(\boldsymbol{x}) + \tau \log \boldsymbol{x}_{i}$ and $\overline{\boldsymbol{F}}_{i}^{\tau, \boldsymbol{x}} = \sum_{a_{i} \in \mathcal{A}_{i}} \boldsymbol{F}_{i}^{\tau, \boldsymbol{x}}(a_{i}) / |\mathcal{A}_{i}| \mathbf{1}$. Because the regularized utility $u_{i}^{\tau}$ is concave in $\boldsymbol{x}_i$, the condition for an NE of the regularized game is that the components of $\nabla_{x_i} u_i^{\tau}(\boldsymbol{x})$ are all equal. Therefore, $\mathcal{L}_{G}^{\tau}(\boldsymbol{x}) = 0$ if and only if $\boldsymbol{x}$ is an NE of the regularized game. An NE of the original game can then be approximated by driving $\tau \to 0$.

However, as \citet{mengreducing} point out, $\mathcal{L}_{G}^{\tau}(\boldsymbol{x})$ suffers from high-variance loss estimates, which can impede convergence. 
% This high variance arises because an unbiased estimator for the squared norm, typically requires the inner product of two independent and identically distributed random variables. The variance of such an estimator scales with the square of the variance of each individual variable, leading to instability. 
To address this, \citet{mengreducing} introduced the surrogate Nash Advantage Loss (NAL), $\mathcal{L}_\text{NAL}^{\tau}(\bm{x})$, designed to enable low-variance, unbiased estimation using only a single random variable. The loss is defined as:
\begin{equation}
\setlength\abovedisplayskip{3pt}
\setlength\belowdisplayskip{3pt}
\thinmuskip=0mu
\medmuskip=0mu
\thickmuskip=0mu
\spaceskip=-0pt
\mathcal{L}_\text{NAL}^{\tau}(\bm{x}) = \sum_{i \in \mathcal{P}} \langle \text{sg}[\bm{F_i}^{\tau, \bm{x}} - \langle \bm{F_i}^{\tau, \bm{x}}, \hat{\bm{x}}_i \rangle \mathbf{1}], \bm{x}_i \rangle,
\label{eq:NAL}
\end{equation}
where $\boldsymbol{F}_{i}^{\tau, \boldsymbol{x}} = -\nabla_{\bm{x}_{i}} u_{i}^{\tau}(\boldsymbol{x})$, $\hat{\bm{x}} \in \bm{\mathcal{X}}$ can be any strategy profile (with $\hat{\bm{x}}_i \neq \mathbf{0}$ for all $i \in \mathcal{P}$), and $\text{sg}[\cdot]$ is the stop-gradient operator that implies the term in this operator is not involved in gradient backpropagation. So the first-order gradient of NAL is:
\begin{equation}
\setlength\abovedisplayskip{2pt}
\setlength\belowdisplayskip{2pt}
\thinmuskip=0mu
\medmuskip=0mu
\thickmuskip=0mu
\spaceskip=-0pt
\begin{aligned}
\nabla_{\boldsymbol{x}_{i}} \mathcal{L}_\text{NAL}^{\tau}(\boldsymbol{x}) &= \text{sg}\big[\boldsymbol{F}_{i}^{\tau, \boldsymbol{x}} - \langle \boldsymbol{F}_{i}^{\tau, \boldsymbol{x}}, \hat{\boldsymbol{x}}_{i} \rangle \mathbf{1}\big] \\
&= -\nabla_{\bm{x}_{i}} u_{i}^{\tau}(\boldsymbol{x}) + \langle \nabla_{\bm{x}_{i}} u_{i}^{\tau}(\boldsymbol{x}), \hat{\boldsymbol{x}}_{i} \rangle \mathbf{1}.
\end{aligned}
\end{equation}

% Formally, the first-order gradient can be:
% $$\left[-\nabla_{\bm{x}_{i}} u_{i}(\boldsymbol{x}) + \langle \nabla_{\bm{x}_{i}} u_{i}(\boldsymbol{x}), \hat{\boldsymbol{x}}_{i} \rangle \mathbf{1} \mid i \in \mathcal{P}\right]$$

Specifically, $\nabla_{\boldsymbol{x}_{i}} \mathcal{L}_\text{NAL}^{\tau}(\boldsymbol{x}) = \bm{0}$ if and only if the gradients for all actions are identical, which is equivalent to the global minimum of $\mathcal{L}_{G}^{\tau}(\boldsymbol{x})$. Therefore, NAL employs a single random variable to obtain an unbiased estimate of the loss function, which reduces variance and transforms the problem of finding an NE into a stochastic optimization task.

\section{Methodology}
In this section, we introduce the \textbf{T}ree-based \textbf{S}tochastic \textbf{O}ptimization (TSO) framework. First, we address the limitation of neural networks in representing non-enumerable actions by introducing the tree-based action representation. We then integrate this representation into the NAL and prove its equivalence. Additionally, we propose a sample-and-prune mechanism to mitigate the risk of converging to local optima.

\subsection{Tree-Based Action Representation}
\label{subsec:Tree-Based Action Representation}

Using a tree to model each player's action space avoids the need to enumerate all possible actions. In this approach, actions are generated through a sequence of decisions at the nodes of the tree. Since the attacker and defender use different processes to generate actions, we propose two different tree construction strategies, each specifically designed for their respective action space.

\begin{figure}[htbp]
    \centering
    \includegraphics[width=\columnwidth]{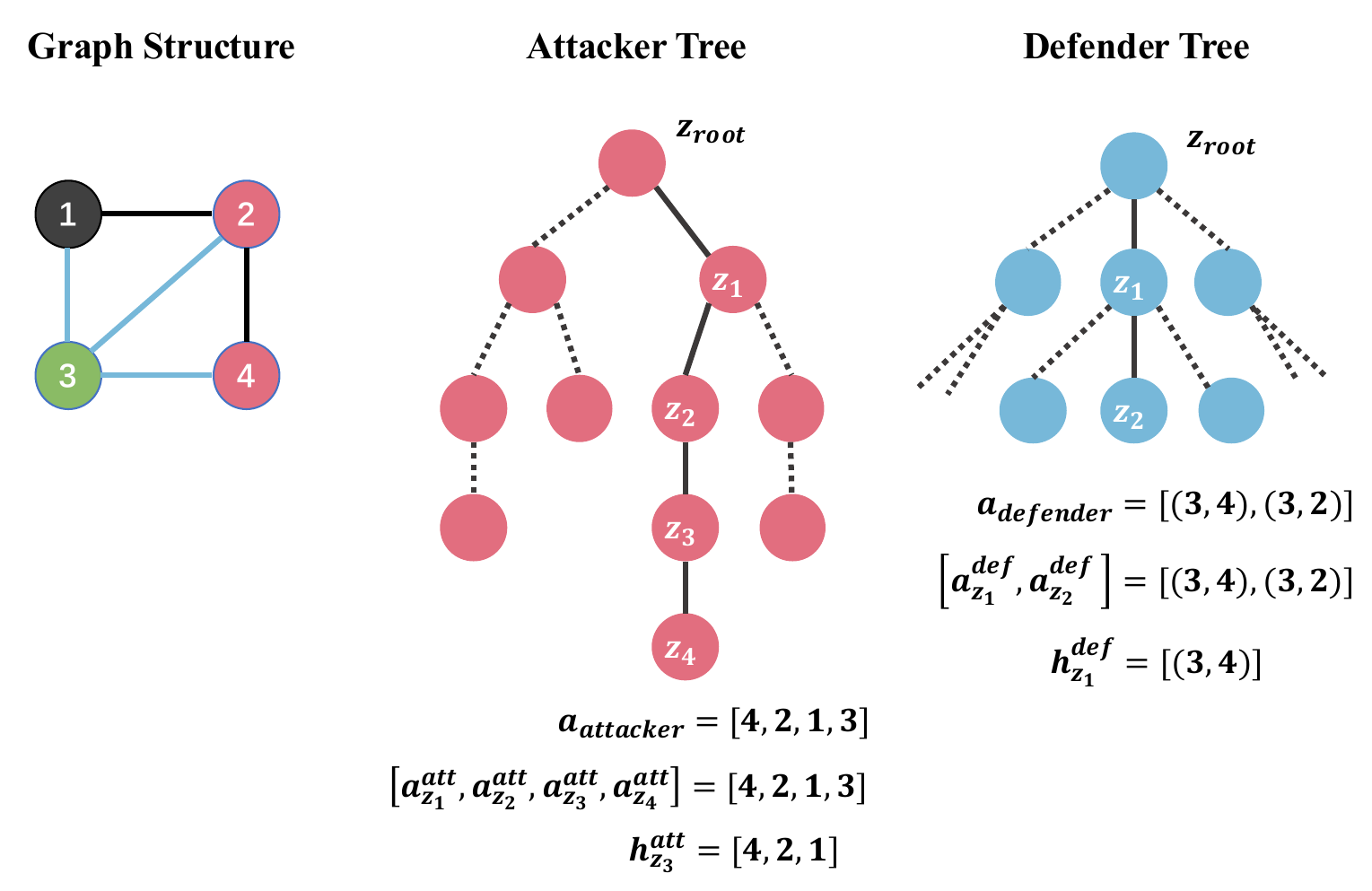}
    \caption{\small An example illustrating the construction of the attacker and defender action representation trees. Taking this graph structure as an example, red vertices are possible starting points for the attacker, and the green vertex is the attacker's target. The defense team consists of two defenders, each of whom can deploy a resource on one of the three blue edges. The attacker action [4,2,1,3] corresponds to the path from $z_{\mathrm{root}}$ to $z_4$ in the attacker tree (solid lines). Similarly, the defender action [(3,4), (3,2)] corresponds to a path from $z_{\mathrm{root}}$ to $z_2$ in the defender tree.}
    \label{fig:tree_construction}
    % \vspace{-15pt}
\end{figure}

\subsubsection{Tree Construction of Attacker Action.}

The core idea of constructing a tree to represent the attacker's action space is to map each simple path in the graph $G = (V, E)$ to a unique path in the tree. We construct a tree, denoted as $\mathcal{T}_{\text{attacker}}$, which decomposes an attacker action into a series of steps, with each step corresponding to a node in $\mathcal{T}_{\text{attacker}}$. An example illustrating the construction of the attacker's action representation tree is shown in Figure~\ref{fig:tree_construction}. The tree is constructed according to the following principles:

\begin{itemize}
    \item \textbf{Tree Structure and Nodes:} Each node $z \in \mathcal{T}_{\text{attacker}}$ represents a state in the attacker's path-finding process and is defined by two components:

    \textbf{1. Action Component ($a_z^{att}$):} For any non-root node $z$, $a_z^{att} \in V$ is the vertex visited to transition from its parent to $z$. For the root node, $z_{\text{root}}$, which represents the state before any action, $a_{z_\text{root}}^{att} = \emptyset$.

    \textbf{2. History Component ($h_z^{att}$):} This component records the sequence of actions taken from $z_{\text{root}}$ to a node $z$, providing the information for subsequent decisions. Formally, for a path of nodes $(z_0, z_1, \ldots, z_i)$ from the root ($z_0 = z_{\text{root}}$) to the current node $z_i$, its history is the corresponding sequence of actions: $h_{z_i}^{att} = (a_{z_1}^{att}, a_{z_2}^{att}, \ldots, a_{z_i}^{att})$. The history of the root node is empty: $h_{z_{\text{root}}}^{att} = \emptyset$.

    \item \textbf{Children Construction:} The children of any node $z \in \mathcal{T}_{\text{attacker}}$ are determined by the valid subsequent actions from the state represented by $z$. For the root node $z_{\text{root}}$, its children correspond to all possible starting vertices in $V_{\text{start}}$. For each $v \in V_{\text{start}}$, a child node $z'$ is created with $a_{z'}^{att} = v$ and $h_{z'}^{att} = (v)$. For any non-root, non-leaf node $z$, the set of available next vertices is the set of neighbors of $a_z^{\text{att}}$ in the graph $G$, denoted $N(a_z^{att})$. To maintain the simple path constraint and ensure the path can reach a target, the set of valid next actions is filtered. Specifically, a neighbor $v'$ is considered valid if it has not been visited (i.e., $v' \notin h_z^{att}$) and there exists a simple path to a target via $v'$. The set of valid actions is denoted as $A_{\text{valid}}(z)$. For each vertex $v' \in A_{\text{valid}}(z)$, a child node $z_{\text{child}}$ is generated, with its components defined as $a_{z_{\text{child}}}^{att} = v'$ and $h_{z_{\text{child}}}^{att} = h_z^{att} \oplus (v')$, where $\oplus$ denotes sequence concatenation.

\end{itemize}

\paragraph{Path-to-Action Mapping.} This construction ensures that any path from $z_{\text{root}}$ to a leaf node $z$ corresponds to a unique attacker action $a_{\text{attacker}}$. Therefore, the process of selecting an attacker's action reduces to applying a policy to navigate from the root to a leaf in this tree. Thus, there is a one-to-one mapping between all simple paths in $G$ (from a vertex in $V_{\text{start}}$ to a vertex in $V_{\text{target}}$) and the root-to-leaf paths in $\mathcal{T}_{\text{attacker}}$.

    % \item \textbf{Sampling Process:} At any given node $u_i$ in the tree, the policy selects a successor node $u_{i+1}$ from its children. This process is repeated, transitioning from node to node, until a leaf node is reached. A leaf node in $\mathcal{T}_{\text{attacker}}$ represents the termination of an attack path, $a_{u_{term}}$ corresponding to the final vertex $v_t$ in the graph.

\paragraph{Action Probability.} Based on the sequential decision-making process defined by $\mathcal{T}_{\text{attacker}}$, we now formalize the probability of an attacker's action. An attacker's action, $a_{\text{attacker}}$, is a simple path from a starting vertex to a target, represented by $a_{\text{attacker}} = (v_1, v_2, \ldots, v_L)$. This sequence corresponds to a unique root-to-leaf path $(z_0, z_1, \ldots, z_L)$ in $\mathcal{T}_{\text{attacker}}$, where $z_0=z_{\text{root}}$ and for each step $i \in \{1, \ldots, L\}$, node $z_i$ is chosen from the children of $z_{i-1}$ with its action component being $a_{z_i}^{att} = v_i$. The selection of each vertex in the sequence is controlled by a policy, $\pi_{\theta}$, parameterized by $\theta$. At each node $z_{i-1}$ on the path, the policy computes a probability distribution over all valid next actions. The probability of selecting the specific next vertex $v_i$ (and thus transitioning to node $z_i$) is conditioned on the $h_{z_{i-1}}^{att}$. This conditional probability is denoted as $\pi_{\theta}(a_{z_i}^{att} \mid h_{z_{i-1}}^{att})$. Applying the chain rule of probability, the likelihood of the $a_{\text{attacker}}$ is the product of the conditional probabilities for each decision along the corresponding path in $\mathcal{T}_{\text{attacker}}$: $\pi_{\theta}(a_{\text{attacker}}) = \prod_{i=1}^{L} \pi_{\theta}(a_{z_i}^{att} \mid h_{z_{i-1}}^{att})$

\paragraph{Tree Construction of Defender Action.} The construction of the action representation tree for the defense team is similar to that of the attacker. Each node $z$ in the $\mathcal{T}_{\text{defender}}$ consists of two components: the Action Component ($a_z^{def}$) and the History Component ($h_z^{def}$). The primary difference is that, in $\mathcal{T}_{\text{defender}}$, $a_z^{def}$ represents an edge in the graph, while $h_z^{def}$ denotes a sequence of edges. Further details regarding the construction of $\mathcal{T}_{\text{defender}}$ are provided in Appendix \ref{appendix:representation_details}.

\paragraph{Advantages.} A key advantage of our tree-based action representation is that neither $\mathcal{T}_{\text{attacker}}$ nor $\mathcal{T}_{\text{defender}}$ needs to be constructed in advance. Instead, we employ neural networks to represent the tree structure. Child nodes are generated dynamically at each node $z$, where the neural network takes the history component $h_z$ as input and makes decisions based on the historical information. Sampling only considers current-level candidates, yielding $O(d|E|)$ time per sampled action (with $d$ the maximum out-degree, $|E|$ the simple-path length bound).The output represents the probabilities of selecting each child node.  Additionally, we employ action masking to handle the variable number of child nodes. Further implementation details are in Appendix \ref{appendix:representation_details}. As a result, our method eliminates the need to enumerate the entire action space. 

\subsection{Equivalence to NAL}
\label{subsec:Equivalence to NAL}

This section formally shows how our tree-based action representation preserves the NAL property. Specifically, we demonstrate the equivalence between the tree-based and the original NAL. This ensures that optimizing the tree-based NAL shares the same objective as the original and thus satisfies the same NE conditions.

\subsubsection{Notation.}

To facilitate the proof, we first establish our notation. For each player $i \in \mathcal{P}$, the set of available actions is $\mathcal{A}_i$. We use an index $k \in \{0, 1, \dots, |\mathcal{A}_i|-1\}$ to refer to a specific action in this set. A mixed strategy for player $i$ is a probability distribution over $\mathcal{A}_i$, represented by the vector $\bm{x}_i = [\sigma_0, \sigma_1, \dots, \sigma_{|\mathcal{A}_i|-1}]$, where $\sigma_k$ is the probability of selecting the action indexed by $k$. We also consider another mixed strategy $\hat{\bm{x}}_i$ as used in the NAL.

In our method, the action probabilities $\sigma_k$ are not atomic variables but are constructed via a tree-based action representation. Each player $i$ has its own tree $\mathcal{T}_i$. Let $E_\text{tree}^i$ be the set of all edges in this tree. For each edge $j \in E_\text{tree}^i$, we define $\dot{\sigma}_j$ as the probability of traversing that edge from its parent node to the child. Each player $i$'s action $k$ corresponds to a unique root-to-leaf path in $\mathcal{T}_i$. We denote the set of edges constituting this path as $S_k \subseteq E_\text{tree}^i$. The probability of player $i$ selecting the action $k$ is the product of the traversal probabilities of all edges along this path: $\sigma_k = \prod_{j \in S_k} \dot{\sigma}_j$
% \begin{equation}
% \setlength\abovedisplayskip{2pt}
% \setlength\belowdisplayskip{2pt}
% \thinmuskip=0mu
% \medmuskip=0mu
% \thickmuskip=0mu
% \spaceskip=-0pt

% \label{eq:TSO_action_representation}
% \end{equation}

\subsubsection{Tree-Based NAL.}
Building upon the tree-based action representation and Eq.(~\ref{eq:NAL}), we now derive the tree-based NAL as follows:
\begin{equation}
\setlength\abovedisplayskip{2pt}
\setlength\belowdisplayskip{2pt}
\thinmuskip=0mu
\medmuskip=0mu
\thickmuskip=0mu
\spaceskip=-0pt
\begin{aligned}
\mathcal{L}_{\text{TSO}}^\tau(\bm{x}) &= \sum_{i \in \mathcal{P}}\sum_{k=0}^{|\mathcal{A}_i|-1} \text{sg}\left[ \bm{F}_i^{\tau, \bm{x}}(k) - \langle \bm{F}_i^{\tau, \bm{x}}, \hat{\bm{x}}_i \rangle \right] \sigma_k \\
&= \sum_{i \in \mathcal{P}}\sum_{k=0}^{|\mathcal{A}_i|-1} \text{sg}\left[ \bm{F}_i^{\tau, \bm{x}}(k) - \langle \bm{F}_i^{\tau, \bm{x}}, \hat{\bm{x}}_i \rangle \right] \prod_{j \in S_k} \dot{\sigma}_j ,
\end{aligned}
\end{equation}
where $\bm{F}_i^{\tau, \bm{x}}(k)$ denotes the value of $\bm{F}_i^{\tau, \bm{x}}$ corresponding to player $i$'s $k$-th action; other terms are as in Eq.(\ref{eq:NAL}). Derivation details are provided in Appendix~\ref{appendix:derivation_TSO}.

\subsubsection{First-Order Gradient of $\mathcal{L}_{\text{TSO}}^\tau(\bm{x})$.}

The optimization of $\mathcal{L}_{\text{TSO}}^\tau$ proceeds by updating the parameters of the policy, which are the edge probabilities $\{\dot{\sigma}_j\}_{j \in E_\text{tree}^i}$. To derive the gradients for these parameters, we focus the analysis on player $i$ and consider the partial derivative of the loss with respect to one of its action probabilities $\sigma_k$, where $k \in \{0, 1, \dots, |\mathcal{A}_i|-1\}$. Due to the stop-gradient operator sg[·], which treats its argument as a constant during differentiation, the derivative simplifies to:
\begin{equation}
\setlength\abovedisplayskip{2pt}
\setlength\belowdisplayskip{2pt}
\thinmuskip=0mu
\medmuskip=0mu
\thickmuskip=0mu
\spaceskip=-0pt
\frac{\partial \mathcal{L}_\text{TSO}^\tau(\bm{x})}{\partial \sigma_k} = \text{sg}\left[ \bm{F}_i^{\tau, \bm{x}}(k) - \langle \bm{F}_i^{\tau, \bm{x}}, \hat{\bm{x}}_i \rangle \right].  \end{equation}

This term is precisely the first-order gradient for action $k$ in the NAL. For notational clarity, we define:
\begin{equation}
% \small
\setlength\abovedisplayskip{2pt}
\setlength\belowdisplayskip{2pt}
\thinmuskip=0mu
\medmuskip=0mu
\thickmuskip=0mu
\spaceskip=-0pt
    g_{i,k} := \text{sg}\left[ \bm{F}_i^{\tau, \bm{x}}(k) - \langle \bm{F}_i^{\tau, \bm{x}}, \hat{\bm{x}}_i \rangle \right].
\end{equation}
Next, we use the multivariate chain rule to propagate this gradient back to an arbitrary edge probability $\dot{\sigma}_j$:
\begin{equation}
% \small
\setlength\abovedisplayskip{2pt}
\setlength\belowdisplayskip{2pt}
\thinmuskip=0mu
\medmuskip=0mu
\thickmuskip=0mu
\spaceskip=-0pt  
\frac{\partial \mathcal{L}_\text{TSO}^\tau(\bm{x})}{\partial \dot{\sigma}_j} = \sum_{k=0}^{|\mathcal{A}_i|-1} \frac{\partial \mathcal{L}_\text{TSO}^\tau(\bm{x})}{\partial \sigma_k} \frac{\partial \sigma_k}{\partial \dot{\sigma}_j} = \sum_{k=0}^{|\mathcal{A}_i|-1} g_{i,k} \frac{\partial \sigma_k}{\partial \dot{\sigma}_j} .
\end{equation}

The partial derivative $\frac{\partial \sigma_k}{\partial \dot{\sigma}_j}$ is non-zero only if edge $j$ is on the path $S_k$. If $j \in S_k$, this derivative is the product of all other edge probabilities on the path:
\begin{equation}
\setlength\abovedisplayskip{2pt}
\setlength\belowdisplayskip{2pt}
\thinmuskip=0mu
\medmuskip=0mu
\thickmuskip=0mu
\spaceskip=-0pt  
    \frac{\partial \sigma_k}{\partial \dot{\sigma}_j} = 
\begin{cases}
\prod_{l \in S_k, l \neq j} \dot{\sigma}_l & \text{if } j \in S_k \\
0 & \text{otherwise}.
\end{cases}
\end{equation}

Let $\sigma_k^{\backslash j}$ denote $\prod_{l \in S_k, l \neq j} \dot{\sigma}_l$. The summation for the gradient thus reduces to only those actions whose paths include edge $j$. Let $\mathcal{A}_i(j) := \{k \in \{0, \dots, |\mathcal{A}_i|-1\} \mid j \in S_k\}$ be the set of indices for such actions. The final gradient expression is:
\begin{equation}
% \small
\setlength\abovedisplayskip{2pt}
\setlength\belowdisplayskip{2pt}
\thinmuskip=0mu
\medmuskip=0mu
\thickmuskip=0mu
\spaceskip=-0pt
    \frac{\partial \mathcal{L}_\text{TSO}^\tau(\bm{x})}{\partial \dot{\sigma}_j} = \sum_{k \in \mathcal{A}_i(j)} g_{i,k} \cdot \sigma_k^{\backslash j}.
\end{equation}

% \subsubsection{Equivalence Proof}
% \begin{proposition}
% For any edge $j$ in $\mathcal{T}_{i}$, the first-order gradient of the Tree-based NAL equals to zero if and only if the first-order gradient of NAL is $\bm{0}$.
% \begin{equation}
% \setlength\abovedisplayskip{2pt}
% \setlength\belowdisplayskip{2pt}
% \thinmuskip=0mu
% \medmuskip=0mu
% \thickmuskip=0mu
% \spaceskip=-0pt
% \begin{aligned}
% \frac{\partial \mathcal{L}_\text{TSO}^\tau(\bm{x})}{\partial \dot{\sigma}_j} = 0 \quad \forall j \in E_{tree}^i 
% \quad \iff\quad \notag \\
% {} \qquad [g_{i,0},\, g_{i,1},\, \dots,\, g_{i, |\mathcal{A}_i|-1}]^\top = \bm{0}.
% \end{aligned}    
% \end{equation}
% \end{proposition}
\begin{proposition}
For any edge $j$ in $\mathcal{T}_{i}$, the first-order gradient of the tree-based NAL equals zero if and only if the first-order gradient of NAL is $\bm{0}$, i.e.,
\label{proposition:equivalence}
\begin{equation}
\setlength\abovedisplayskip{2pt}
\setlength\belowdisplayskip{2pt}
\frac{\partial \mathcal{L}_\text{TSO}^\tau(\bm{x})}{\partial \dot{\sigma}_j} = 0 \quad \forall \, j \, \in E_{tree}^i \; \iff \; \bm{g}_i = \bm{0},
\end{equation}
where $\bm{g}_i = [g_{i,0},\, g_{i,1},\, \ldots,\, g_{i, |\mathcal{A}_i|-1}]^\top$.
The detailed proof of this proposition is provided in Appendix \ref{appendix:equivalance_proof}.
\end{proposition}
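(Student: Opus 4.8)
The plan is to prove the two implications separately, using the explicit gradient formula $\frac{\partial \mathcal{L}_\text{TSO}^\tau(\bm{x})}{\partial \dot{\sigma}_j} = \sum_{k \in \mathcal{A}_i(j)} g_{i,k}\,\sigma_k^{\backslash j}$ together with the interior assumption $\bm{x} \in \bm{\mathcal{X}}^\circ$, which guarantees that every pure-strategy probability $\sigma_k$, and hence every edge probability $\dot{\sigma}_j$, is strictly positive. I would record this positivity at the outset, since a zero edge probability on any root-to-leaf path would force the corresponding $\sigma_k = \prod_{j \in S_k}\dot{\sigma}_j$ to vanish, contradicting interiority.

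The reverse implication ($\Leftarrow$) is immediate: if $\bm{g}_i = \bm{0}$, then $g_{i,k} = 0$ for every action index $k$, so every summand in the gradient formula vanishes and $\frac{\partial \mathcal{L}_\text{TSO}^\tau(\bm{x})}{\partial \dot{\sigma}_j} = 0$ for all $j \in E_\text{tree}^i$.

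For the forward implication ($\Rightarrow$), the key structural observation is the one-to-one correspondence between actions and leaves established in the path-to-action mapping: each action $k$ sits at a unique leaf of $\mathcal{T}_i$, reached through a unique final edge $j_k \in S_k$. Because a leaf has no children, no other root-to-leaf path traverses $j_k$, so $\mathcal{A}_i(j_k) = \{k\}$ and the gradient equation at $j_k$ collapses to the single term $g_{i,k}\,\sigma_k^{\backslash j_k} = 0$. Since $\sigma_k^{\backslash j_k}$ is a product of strictly positive edge probabilities (an empty product equal to $1$ when the path consists of a single edge), it is strictly positive, forcing $g_{i,k} = 0$. Ranging over all actions $k$—equivalently, over all leaf edges—then yields $\bm{g}_i = \bm{0}$.

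I do not expect a genuine obstacle here; the one point requiring care is the strict positivity of $\sigma_k^{\backslash j_k}$, which is exactly where interiority is indispensable, for otherwise the equation at $j_k$ could be satisfied vacuously and $g_{i,k}$ could not be isolated. An alternative bottom-up induction that peels off one action per leaf would also work, but it is unnecessary: the leaf-edge equations already isolate each $g_{i,k}$ directly, so the direct argument is the cleanest route.
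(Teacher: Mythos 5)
Your proof is correct and follows essentially the same route as the paper's: the reverse direction by direct substitution, and the forward direction by observing that each action's terminal (leaf) edge is traversed by no other root-to-leaf path, so $\mathcal{A}_i(j_k^{\text{term}})=\{k\}$ and the gradient equation there isolates $g_{i,k}$, which positivity of $\sigma_k^{\backslash j_k^{\text{term}}}$ forces to zero. The only difference is cosmetic: you invoke interiority $\bm{x}\in\bm{\mathcal{X}}^\circ$ explicitly to guarantee strictly positive edge probabilities, whereas the paper phrases the same step as holding "for any action $k$ with non-zero probability" — your version is, if anything, slightly more careful about where that positivity comes from.
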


According to Proposition \ref{proposition:equivalence}, we demonstrated equivalence between $\mathcal{L}_{\text{TSO}}^\tau(\bm{x})$ and $\mathcal{L}_\text{NAL}^{\tau}(\bm{x})$ when their first-order derivatives are zero. This implies that optimization of $\mathcal{L}_{\text{TSO}}^\tau(\bm{x})$ is the same as optimization of $\mathcal{L}_\text{NAL}^{\tau}(\bm{x})$. Consequently, if the optimization of $\mathcal{L}_\text{NAL}^{\tau}(\bm{x})$ converges to an NE, it follows that the optimization of $\mathcal{L}_{\text{TSO}}^\tau(\bm{x})$ will converge to an NE.

\subsection{Tree-Based Stochastic Optimization}
\label{subsec:Sample-and-Prune Mechanism}
In this section, we introduce the Sample-and-Prune mechanism and outline the TSO framework, showing how Tree-Based Action Representation and Sample-and-Prune are integrated to optimize our Tree-Based NAL.

\paragraph{Sample-and-Prune Mechanism.} The core of optimizing loss function with stochastic gradient descent lies in action sampling. We observe that, during the training process, a policy might discover a temporarily effective strategy and, through a strong positive feedback loop, quickly increase the probabilities of the related actions. This often results in convergence to a suboptimal local optimum. Consequently, the gradients computed from sampled actions approach zero, making it challenging for the agent to escape this suboptimal state, even when employing exploration methods such as $\epsilon$-greedy with a high exploration rate.

To address this issue, we introduce the Sample-and-Prune mechanism. This approach decomposes the action sampling process into two distinct steps, with the final action being used to compute the loss and update the network parameters. The procedure is as follows:

\begin{enumerate}
    \item \textbf{Sample:} An initial action, $a_k$, is sampled from the current policy distribution $\pi$.
    \item \textbf{Prune and Re-sample:} The initially sampled action $a_k$ is pruned from the set of available actions. A second action, $a_{k'}$, is then sampled from the same policy distribution $\pi$, conditioned on the pruned action space (i.e., $a_{k'} \neq a_k$).
    % \item \textbf{Gradient Update:} The loss is computed using this second action, $a_{k'}$, and the resulting gradient is used to update the policy network.
\end{enumerate}

\begin{proposition}
\label{sampling_proposition}
The use of the Sample-and-Prune mechanism for action sampling does not affect the equivalence between $\mathcal{L}_\text{TSO}^\tau(\bm{x})$ and $\mathcal{L}_{\text{NAL}}^{\tau}(\bm{x})$.
\end{proposition}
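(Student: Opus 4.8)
The plan is to reduce Proposition~\ref{sampling_proposition} to Proposition~\ref{proposition:equivalence}. The key observation is that Sample-and-Prune is a rule for \emph{drawing} the actions used to form stochastic gradient estimates; it does not alter the loss $\mathcal{L}_\text{TSO}^\tau(\bm{x})$ itself, nor the deterministic identity $\partial\mathcal{L}_\text{TSO}^\tau/\partial\dot\sigma_j = \sum_{k\in\mathcal{A}_i(j)} g_{i,k}\,\sigma_k^{\backslash j}$. Since Proposition~\ref{proposition:equivalence} phrases the equivalence solely through the vanishing of this gradient for every edge $j$, it suffices to show that the \emph{expected} update direction produced under Sample-and-Prune is zero for all $j$ exactly when $\bm{g}_i=\bm{0}$.

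First I would make the induced distribution explicit. Let $\sigma$ be the current policy over $\mathcal{A}_i$. The mechanism draws $a_k\sim\sigma$, removes it, and draws the final action $a_{k'}$ from $\sigma$ renormalized on $\mathcal{A}_i\setminus\{k\}$; marginalizing over the first draw gives $\tilde\sigma(k')=\sigma(k')\,w_{k'}$ with $w_{k'}=\sum_{k\neq k'}\sigma(k)/(1-\sigma(k))$. A short computation using $\sum_{k'}\sigma(k')=1$ shows $\sum_{k'}\tilde\sigma(k')=1$, so $\tilde\sigma$ is a genuine distribution, and $w_{k'}$ is a strictly positive, finite constant for every $k'$ whenever $\bm{x}\in\bm{\mathcal{X}}^\circ$ and $|\mathcal{A}_i|\ge2$. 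Consequently $\tilde\sigma$ has the same support as $\sigma$: pruning merely reweights actions and never excludes any from the sample stream.

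Next I would compute the expected update. Forming the stochastic estimate from the single final action drawn from $\tilde\sigma$ via the standard score-function identity, together with the tree relation $\partial\log\sigma_{k}/\partial\dot\sigma_j = 1/\dot\sigma_j$ on the paths through $j$, the expected gradient with respect to $\dot\sigma_j$ reduces to $\sum_{k\in\mathcal{A}_i(j)} w_k\, g_{i,k}\,\sigma_k^{\backslash j}$ --- i.e. exactly the TSO gradient but with each action term scaled by the strictly positive factor $w_k$. The ``if'' direction ($\bm{g}_i=\bm{0}\Rightarrow$ zero) is immediate. For the ``only if'' direction I would use the leaf edges: the edge into the leaf of action $k$ lies on a single root-to-leaf path, so $\mathcal{A}_i(j)=\{k\}$ there and the equation collapses to $w_k\,\sigma_k^{\backslash j}\,g_{i,k}=0$, forcing $g_{i,k}=0$ because $w_k,\sigma_k^{\backslash j}>0$. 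Ranging over all leaves yields $\bm{g}_i=\bm{0}$, which by the analysis preceding Proposition~\ref{proposition:equivalence} is exactly the condition under which $\nabla\mathcal{L}_\text{NAL}^\tau=\bm{0}$. Thus the equivalence is preserved.

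The main obstacle is ensuring that the reweighting by the nontrivial marginal $\tilde\sigma$ cannot create spurious cancellations that would let the aggregated gradient vanish while some $g_{i,k}\neq0$. This is precisely why the positivity of $w_{k'}$ matters: the inversion ``aggregate gradient zero $\Rightarrow \bm{g}_i=\bm{0}$'' in Proposition~\ref{proposition:equivalence} relies only on the mixing weights being strictly positive, not on their specific values, so any full-support reweighting leaves it intact. The care needed is therefore confined to verifying full support and ruling out degenerate cases (e.g. $|\mathcal{A}_i|=1$, or boundary profiles where some $\sigma(k)\to1$), which is exactly what the interior assumption $\bm{x}\in\bm{\mathcal{X}}^\circ$ supplies.
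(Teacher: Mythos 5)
Your proof is correct, but it takes a genuinely different and more self-contained route than the paper. The paper's own proof is a short, high-level argument: it observes that Sample-and-Prune merely replaces the sampling (exploration) distribution used to construct the gradient estimator, and then appeals to the cited fact that NAL's convergence guarantee is agnostic to the particular sampling policy, since the reference strategy $\hat{\bm{x}}_i$ in the loss may be any nonzero strategy. You instead prove that agnosticism directly for the case at hand: you derive the induced marginal $\tilde\sigma(k')=\sigma(k')\,w_{k'}$ with $w_{k'}=\sum_{k\neq k'}\sigma(k)/(1-\sigma(k))$, verify it is a genuine full-support distribution on the interior of the simplex, show the expected update direction is a strictly-positively reweighted version of the TSO gradient, and then reuse the terminal-edge argument of Proposition~\ref{proposition:equivalence} to conclude its zero set still coincides with $\bm{g}_i=\bm{0}$. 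What your approach buys is a rigorous verification that pins down exactly where positivity and interiority are needed and why no cancellation can occur; what the paper's approach buys is brevity, at the cost of deferring the substance to an external result. One detail worth flagging: your weights $w_k$ correspond to an uncorrected score-function estimator under the marginal $\tilde\sigma$, whereas the paper's Algorithm~1 importance-weights by the conditional re-sampling probability $p_k$ (after $\epsilon$-greedy smoothing and pruning), which makes the expected update reweighted by $(1-\sigma_k)$ rather than by $w_k$. Both weightings are strictly positive in the interior, so your leaf-edge argument goes through unchanged, but you should state explicitly which estimator you are analyzing, since the claimed formula for the expected gradient depends on that choice.
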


Further details on the sample-and-prune mechanism and the proof of Proposition \ref{sampling_proposition} are provided in Appendix \ref{appendix:sample}.

\begin{algorithm}[t]
    \centering \small
    \caption{TSO}\label{alg:our algorithm}
    \begin{algorithmic}[1]
        \STATE {\bfseries Input:} An optimizer $\mathcal{OPT}$, the player set $\mathcal{P} = \{\text{attacker}, \text{defender}\}$, the exploration ratio $\epsilon$, the uniform strategy profile $\bm{x}^{u}=[\bm{x}^{u}_k| k \in \mathcal{P}]$, the initial parameter $\theta$, the initial parameter $\phi$, the learning rate $\eta$, the regularization scalar $\tau$, the number of total iterations $T$, the number of instances $S$ sampled at per iteration, the frequency $T_u$ of updating $\eta$ and $\tau$, the weight $\alpha$ on updating $\eta$, the weight $\beta$ on updating $\tau$, simulator $\mathcal{G}$ that returns player $i$’s payoff given a joint action.
        \FOR{each $t \in [1,\ 2,\cdots, T]$}
            \STATE $\mathcal{M}_k \leftarrow \{\}, v_k \leftarrow 0,\ \forall k \in \mathcal{P}$
            \FOR{each $s \in [1,\ 2,\cdots, S]$}
                \STATE \textbf{// Our Sample-and-Prune Mechanism}
                \STATE $a_k \sim \bm{x}^{\theta,\phi}_k,\ \forall k \in \mathcal{P}$
                \STATE $\bm{a} \leftarrow (a_k)_{k \in \mathcal{P}}$
                \STATE $\bm{x}_{k}^{\prime} \leftarrow (1 - \epsilon)\bm{x}^{\theta,\phi}_k + \epsilon \bm{x}^{u}_k,\ \forall k \in \mathcal{P}$
                \STATE $\bm{x}_{k}^{\prime}(a_k) \leftarrow 0,\ \forall k \in \mathcal{P}$
                \STATE $\bm{x}_{k}^{\prime} \leftarrow \frac{\bm{x}_{k}^{\prime}}{\Vert \bm{x}_{k}^{\prime} \Vert_1},\ \forall k \in \mathcal{P}$
                \STATE $a_{k^{\prime}} \sim \bm{x}_{k}^{\prime}, p_k \leftarrow \bm{x}_{k}^{\prime}(a_{k^{\prime}}),\ \forall k \in \mathcal{P}$
                \STATE \textbf{// To estimate $\bm{F}^{\tau, \bm{x}^{\theta,\phi}}_k(a_{k^{\prime}})$}
                \STATE $r_k \leftarrow -\mathcal{G}(k, a_{k^{\prime}}, \bm{a}_{-k})+ \tau \log \bm{x}^{\theta,\phi}_k(a_{k^{\prime}}) ,\ \forall k \in \mathcal{P}$
                \STATE $\mathcal{M}_k.\text{append}([i,\  a_{k^{\prime}},\ r_k,\ p_k]) ,\ \forall k \in \mathcal{P}$
                \STATE $v_k \leftarrow v_k + r_k$
            \ENDFOR
            \STATE $\hat{\mathcal{L}}_{\text{TSO}}^\tau(\theta,\phi) \leftarrow 0 $
            \STATE $v_k \leftarrow \frac{v_k}{S} ,\ \forall k \in \mathcal{P}$ \textbf{// To estimate $\langle \bm{F}^{\tau, \bm{x}^{\theta,\phi}}_k , \hat{\bm{x}}_k \rangle$}
            \FOR{each $k \in \mathcal{P}$}
                \FOR{each $[i, a^s_k, r^s_k, p^s_k] \in \mathcal{M}_k$}
                    \STATE \textbf{// To estimate $\bm{F}^{\tau, \bm{x}^{\theta,\phi}}_k  - \langle \bm{F}^{\tau, \bm{x}^{\theta,\phi}}_k  , \hat{\bm{x}}_k \rangle \bm{1}$}
                    \STATE $\bm{g}^s_k \leftarrow \frac{r^s_k - v_k }{p^s_k} \bm{e}_{a^s_k}$
                    \STATE $\hat{\mathcal{L}}_{\text{TSO}}^\tau(\theta,\phi) \leftarrow \hat{\mathcal{L}}_{\text{TSO}}^\tau(\theta,\phi) + \langle sg[\bm{g}^s_k], {\bm{x}}^{\theta,\phi}_k\rangle$
                \ENDFOR
            \ENDFOR
            \STATE $\theta,\phi \leftarrow \mathcal{OPT}.\text{update}(\hat{\mathcal{L}}_{\text{TSO}}^\tau(\theta,\phi))$
            \IF{$t\%T_u=0$}
                \STATE $\eta \leftarrow \alpha \eta$, $\tau \leftarrow \beta \tau$
            \ENDIF
        \ENDFOR
        \STATE {\bfseries Return} $\theta,\phi$
    \end{algorithmic}
\end{algorithm}

\paragraph{Outline of TSO.} Next, we present the pseudocode of TSO in \Cref{alg:our algorithm}, illustrating the integration of Tree-Based Action Representation and the Sample-and-Prune Mechanism (Line 5-11) into stochastic optimization for optimizing the Tree-Based NAL.
% To better elucidate our Sample-and-Prune Mechanism , we present the pseudocode of TSO in \Cref{alg:our algorithm}, which shows how our mechanism integrates into TSO (line 5-11, \Cref{alg:our algorithm}). 
For clarity, we adopt the normal-form game strategy representation, denoting the strategy profile parameterized by $\theta$ and $\phi$ as $\bm{x}^{\theta,\phi} = (\bm{x}^{\theta,\phi}_{\text{attacker}}, \bm{x}^{\theta,\phi}_{\text{defender}})$, while omitting the details of our Tree-Based Action Representation.

Specifically, at each iteration $t$, for the $s$-th sample, the first step of our Sample-and-Prune Mechanism—the ``Sample" phase—is to independently sample, for each player, an action $a_k$ from the strategy profile $\bm{x}^{\theta,\phi}$ using our Tree-Based Action Representation (Line 6). This sampled $a_k$ not only is used in our Sample-and-Prune Mechanism but also serves as an unbiased estimator for environmental dynamics, enabling unbiased estimation of $\mathcal{L}_{\text{TSO}}^{\tau}$. Formally, this sampled $a_k$ for each $k \in \mathcal{P}$ yields the action profile $\bm{a} \leftarrow (a_k)_{k \in \mathcal{P}}$ (Line 7). For any player $k$, $\bm{a}_{-k} = (a_j)_{j \in \mathcal{P},\, j \neq k}$ serves as an unbiased estimate for environmental dynamics.

Subsequently, the second step of our Sample-and-Prune Mechanism—the ``Prune and Re-sample" phase—is, for each player $k$, an alternative action $a_{k^\prime} \neq a_k$ is drawn the following process. First, a modified strategy $\bm{x}_{k}^{\prime} \leftarrow (1 - \epsilon)\bm{x}^{\theta,\phi}_k + \epsilon \bm{x}^{u}_k$ is constructed for each $k \in \mathcal{P}$ (Line 8). Then, the probability $\bm{x}_{k}^{\prime}(a_k)$ is set to zero and $\bm{x}_{k}^{\prime}$ is renormalized to maintain it within simplex (Line 9-10). Finally, an alternative action $a_{k^\prime} \neq a_k$ is sampled from $\bm{x}_{k}^{\prime}$ (Line 11).

Due to space constraints, we do not further elaborate on the remaining components of \Cref{alg:our algorithm}. Detailed explanations are available in Appendix \ref{sec:Overview of TSO}.

{

}

\section{Experiments}
\label{sec:experiments}
 
To evaluate the performance and scalability of our TSO, we conduct a series of experiments on UNSGs with varying scales and complexities. We compare TSO against two baselines: PSRO and NAL.\footnote{Code repository: \url{https://github.com/sxzhuang/TSO_UNSG}.} Methods such as NSGZero/NFSP \cite{xue2022nsgzero, heinrich2016deep} (extensive-form) and EPSRO \cite{zhou2022efficient} (requires explicit action enumeration) are incompatible with our normal-form, non-enumerable setting. Our experiments are performed on a workstation with Intel i9-14900K CPU and NVIDIA RTX 4090 GPU with 24GB memory. 

% Our evaluation is designed to answer the following questions: 
% \textbf{1)} How does TSO perform compared to the baselines in games of different complexity?
% \textbf{2)} How does TSO's performance vary under different game configurations, such as asymmetric payoffs and scenarios where the defense team lacks coordination?
% \textbf{3)} Whether the techniques we proposed have a significant impact on the performance of the TSO method? 
 
\subsection{Experimental Setup}
 
% \subsubsection{Baselines}
% We compare TSO with the following established algorithms in game theory and security games:
% \begin{itemize}
%     \item \textbf{PSRO (Policy-Space Response Oracles)} \cite{lanctot2017deep}: A foundational and widely-used algorithm for computing approximate Nash equilibria in large games. It iteratively computes a best response to the meta-strategy distribution of the opponent.
%     \item \textbf{NAL (Neural Approximate-LP)} \cite{some_nal_citation}: A recent state-of-the-art method that uses neural networks to approximate the solution to the linear programs that define the Nash equilibrium, showing strong performance in complex games.
% \end{itemize}
 
\subsubsection{Game Environments.}
We design three sets of game environments based on graph size and action space: small-scale, medium-scale, and large-scale. The detailed configurations are summarized in Table~\ref{tab:game_settings}, provided in Appendix \ref{appendix:exp}.

\paragraph{Small-Scale Game (S-1).}
This experiment is conducted on a 16 nodes, 40 edges graph. It serves as a foundational test case where the action spaces are small enough, allowing us to verify the convergence properties of our algorithm in a controlled setting. The attacker has 92 possible paths, while the defense team, consisting of two defenders with 11 candidate locations each, has $11^2 = 121$ possible joint actions.
 
\paragraph{Medium-Scale Games (M-1 to M-4).}
There are four experiments which are based on a graph with 64 nodes and 300 edges and are designed to systematically evaluate the scalability of the algorithms.
\begin{itemize}
    \item Scenarios \textbf{M-1, M-2, and M-3} progressively increase the maximum path length for the attacker from 8 to 10. This expands the attacker's action space from \num{1955} to \num{20029}, while the defender's action space remains fixed at 50. This setup specifically tests the algorithms' ability to handle an increase in the attacker's action space.
    \item Scenario \textbf{M-4} builds upon the same graph structure with two defender resources. The maximum path length for the attacker is 7, resulting in 513 possible actions. The defender's action space increases exponentially from 150 to $150^2 = 22,\!500$, creating a game that challenges the defenders' tree-based action representation methods.
\end{itemize}
 
\paragraph{Large-Scale Game (L-1).}
The large-scale experiment is set on a 10000 nodes graph with 31660 edges. Given a maximum path length of 100 for the attacker in this environment, the number of possible attacking paths is so enormous that the action space cannot be feasibly enumerated. This scenario is designed as a test to evaluate the practical applicability of TSO in situations where action enumeration is infeasible.

\paragraph{Game with Asymmetric Payoffs.}
In the experimental setup described above, the attacker receives the same reward regardless of which target is reached. To test our algorithm's adaptability to more complex reward structures, we modify the S-1 setup. We configure the game with 4 distinct targets, each offering a different reward to the attacker: [1, 2, 3, 4]. The attacker's maximum path length is 9, resulting in an action space of 120 strategies. The defense team's setup remains the same as in S-1.

\begin{figure}[htbp]
    \centering
    \includegraphics[width=0.7\columnwidth]{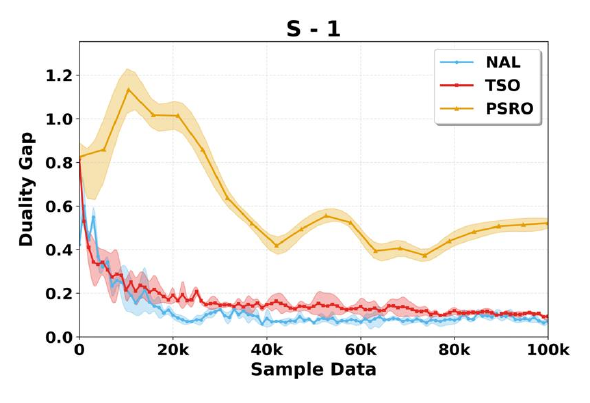}
    \caption{\small Small-Scale Game Experiment Results}
    % \vspace{-15pt}
    \label{fig:small_large_exp}
\end{figure} 
 
\paragraph{Game with Decentralized Defenders.}
We investigate the performance of our algorithm under a decentralized decision-making model for the defense team. The environment is based on the S-1 experiment; however, we remove the assumption of central coordination among defenders. Each defender independently determines its placement decision without knowledge of the actions taken by other defenders, and aims to maximize the overall team utility.

\subsection{Baselines and Evaluation}
 
For games of small and medium scale where the action spaces can be enumerated, we use the \textbf{duality gap}, as introduced in the preliminaries, as our primary evaluation metric. A lower duality gap indicates a closer approximation to an NE.

\begin{figure}[htbp]
    \centering
    \includegraphics[width=\columnwidth]{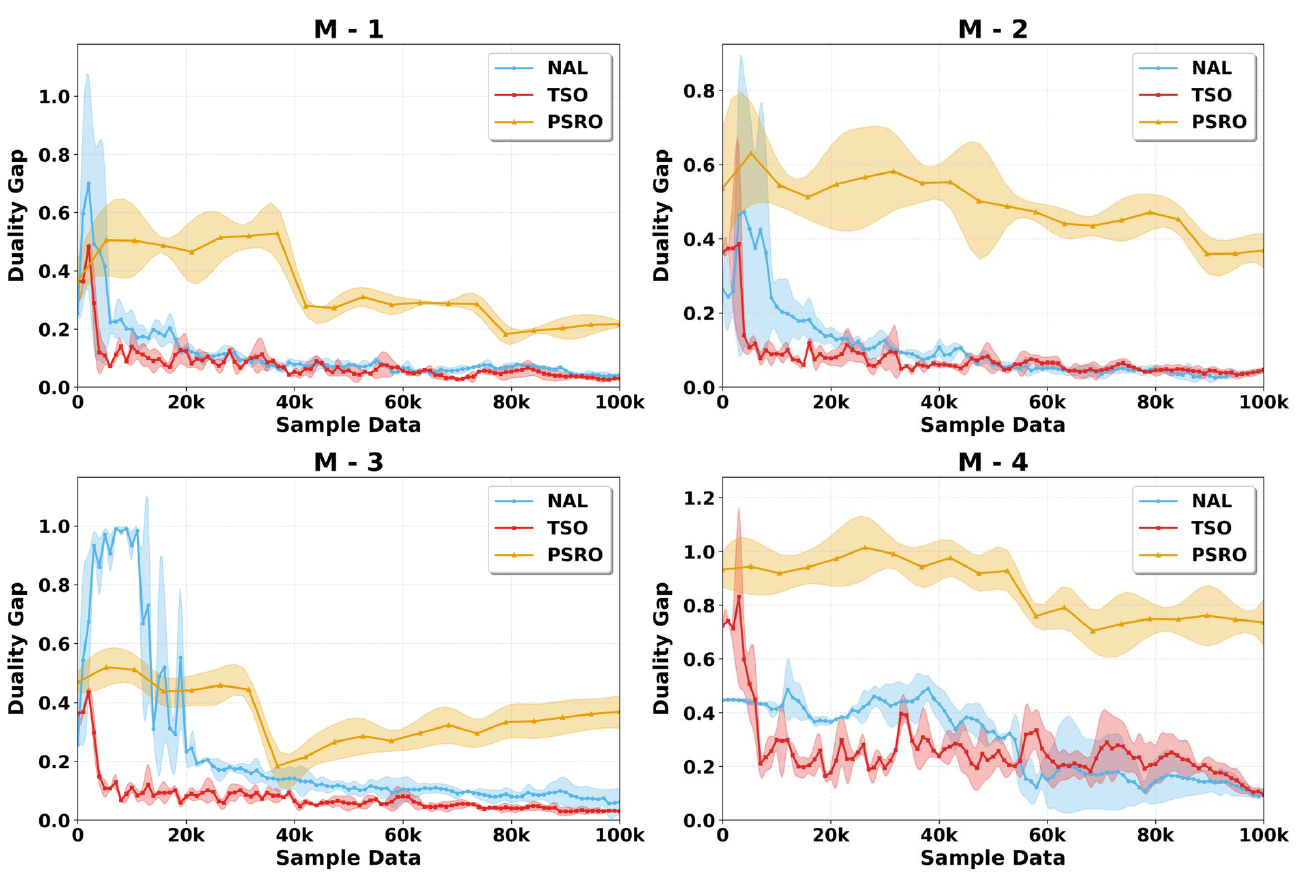}
    \caption{\small Medium-Scale Games Experiment Results}
    % \vspace{-10pt}
    \label{fig:medium_exp}
\end{figure} 

We compare TSO with two baselines: NAL and PSRO. Since NAL and TSO use different network architectures, their hyperparameters were tuned separately to ensure optimal performance. Hyperparameter details are in Appendix \ref{appendix:hyper}. For each experiment, we report the mean and standard deviation over multiple runs with different random seeds.

A direct performance comparison between PSRO and NAL and TSO is non-trivial due to their different training paradigms. An ``iteration" in PSRO, which involves computing a best response and solving a meta-game, is not equivalent to a ``training step" in NAL and TSO. To establish a fair basis for comparison, we align their performance curves using the number of samples collected during training. Specifically, we constrain the total number of samples for PSRO to be identical to that of NAL and TSO. After each PSRO iteration, we evaluate the current policies by calculating their duality gap. This performance metric is then plotted against the cumulative number of samples consumed up to that iteration, enabling a direct alignment with the results of NAL and TSO. For example, as shown in Figure~\ref{fig:medium_exp}, the x-axis of the plot represents the number of sample data.
 
\subsection{Experimental Results Analysis}

\paragraph{Performance on Small- and Medium-Scale Games.}
Figure~\ref{fig:small_large_exp} illustrates that in the small-scale \textbf{S-1} scenario, TSO achieves convergence performance comparable to NAL, with only a marginal gap. This result is expected, as NAL is known to perform well in environments with smaller action spaces. The slight performance difference is acceptable, likely stemming from the increased optimization complexity introduced by the multiplicative structure in TSO.
 
As illustrated in Figure~\ref{fig:medium_exp}, across all medium-scale experiments (\textbf{M-1} to \textbf{M-4}), TSO and NAL exhibit similar duality gaps at convergence. In the case of \textbf{M-3}, TSO method demonstrates a slight advantage over NAL. In all cases, TSO converges faster than NAL. This suggests that as the dimensionality of the action space increases, NAL efficiency becomes limited. Our tree-based action representation enables more efficient optimization. The PSRO baseline's performance is substantially lower than both TSO and NAL across all S and M-series experiments. This underscores a critical limitation of PSRO in UNSGs: the bias introduced by approximating the best response accumulate over iterations, severely hindering convergence to the NE. We also evaluate the impact of the number of defenders on TSO's performance; details are in Appendix~\ref{appendix:defender_number}. Further details regarding the large-scale experiment are presented in Appendix~\ref{appendix:large_discussion}.

\begin{figure}[htbp]
    \centering
    \includegraphics[width=\columnwidth]{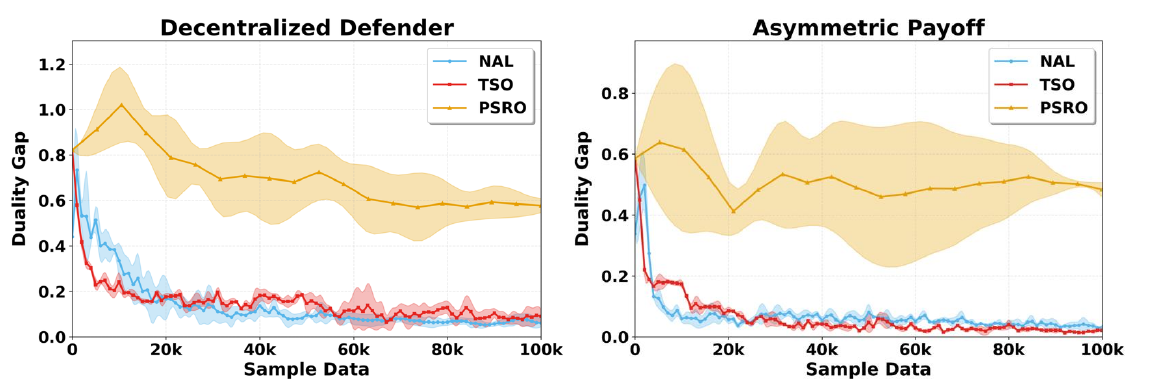}
    \caption{\small Diverse Games Experiment Results}
    % \vspace{-15pt}
    \label{fig:diverse_exp}
\end{figure} 

\paragraph{Performance in Asymmetric and Decentralized Games.}
We further evaluate TSO in a game with asymmetric payoffs and another with decentralized defenders, as shown in Figure \ref{fig:diverse_exp}. In these experiments, TSO's performance remains highly competitive with NAL, and both methods maintain a significant advantage over PSRO. This demonstrates TSO framework's ability to adapt and sustain its strong performance when extending to a wider variety of game structures.

\paragraph{Hyperparameters Sensitivity Analysis and Training Time Comparison.}
We conduct hyperparameters analysis of TSO with respect to key hyperparameters. More details can be found in Appendix \ref{sec:hyperparameter_analysis}. We also compare the wall-clock time of TSO and PSRO, as shown in Appendix \ref{sec:time_comparison}. 

\subsection{Ablation Studies}
 
\paragraph{Role of Tree-Based Action Representation in TSO.}
The most significant function of our tree-based action representation is to make stochastic gradient optimization viable in UNSGs with action spaces too large to be feasibly enumerated. This results demonstrated in the \textbf{L-1} experiment, in Figure~\ref{fig:large_exp}, where TSO achieves better performance than PSRO in a setting where NAL fails to optimize. Furthermore, the performance advantage of TSO over NAL in Figure~\ref{fig:medium_exp} also highlights the benefits of our tree-based action representation.

\begin{figure}[htbp]
    \centering
    \includegraphics[width=0.6\columnwidth]{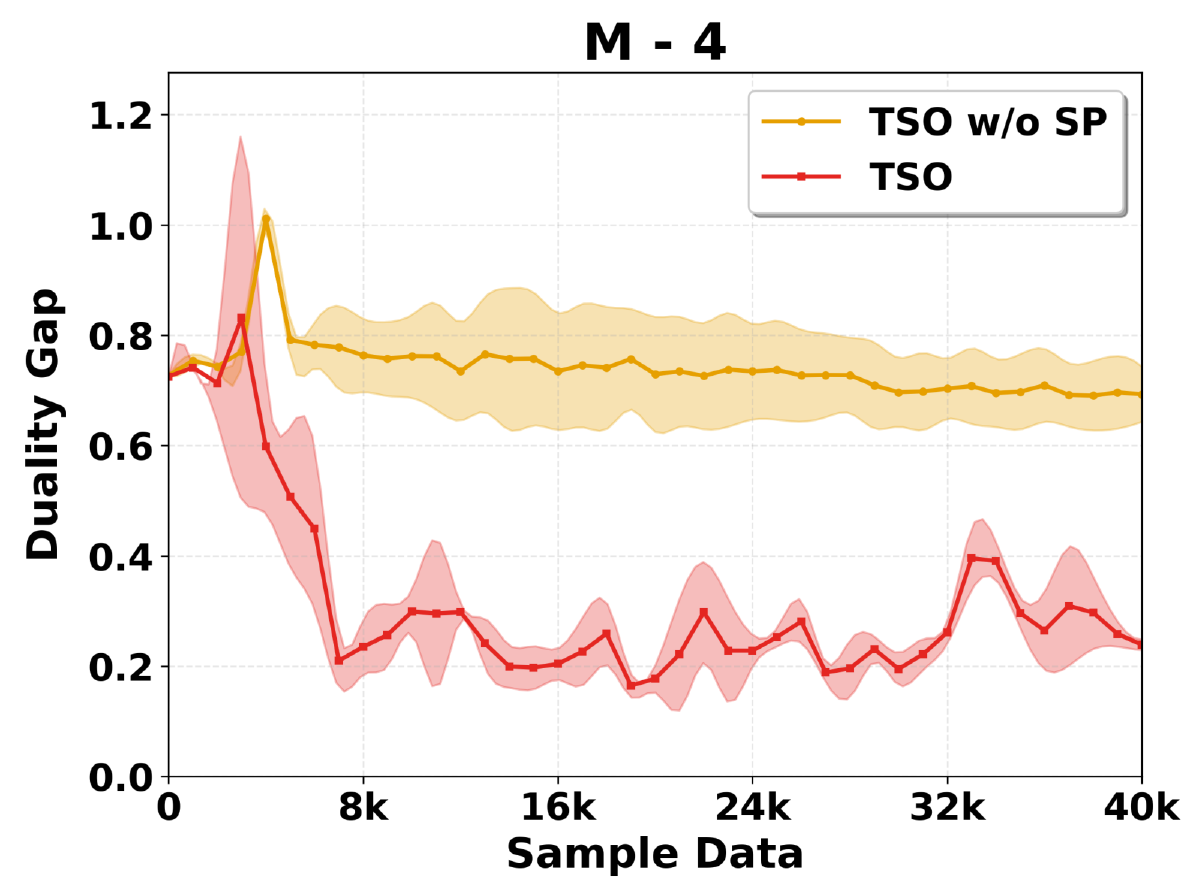}
    \caption{\small Ablation Experiment Results}
    % \vspace{-15pt}
    \label{fig:ablation_exp}
\end{figure} 

\paragraph{Impact of the Sample-and-Prune Mechanism.}
To validate the Sample-and-Prune Mechanism, we performed an ablation on the \textbf{M-4} setting, comparing TSO to a variant without the mechanism (\emph{TSO w/o SP}). As shown in Figure~\ref{fig:ablation_exp}, full TSO converges to a better NE, while the ablated version gets stuck in a local optimum, confirming the Sample-and-Prune Mechanism is essential for improving TSO’s convergence.
 
% \subsection{Summary of Experiments}

% In summary, our experimental results indicate that TSO is a highly scalable framework. It performs comparably to strong baselines in less complex games and demonstrates superior performance as the size and complexity of the problem grow. Ablation analyses show that both the tree-based action representation and the sample-and-prune mechanism are critical to its performance, particularly in games with action spaces where the NAL method does not succeed.

\section{Conclusion}

In this paper, we proposed TSO, a framework that effectively solves large-scale UNSGs. TSO bridges the gap between stochastic optimization for NE-finding and the practical demands of games with massive and combinatorial action spaces. TSO overcomes the challenge of massive action spaces using the tree-based representation for efficient action sampling and a sample-and-prune mechanism to improve solution quality. Extensive experiments confirm that TSO outperforms existing baselines. In summary, TSO provides a scalable and effective solution for a critical class of security games and offers a promising methodology for other large-scale game-theoretic problems.
 
\section*{Acknowledgments}
This research is supported by the InnoHK Funding.

\bibliography{aaai2026}

\appendix

\onecolumn

\section{Implementation Details of Tree-Based Action Representation}
\label{appendix:representation_details}

\subsection{Defender Tree}
% Consider a defense team of $N$ defenders, where each defender $i \in \{1, \ldots, N\}$ must select an action (edge in $G$) $e_i$ from its individual action space $\mathcal{E}_i \subseteq E$. A joint action for the team is a tuple $a_\text{defender} = (e_1, e_2, \ldots, e_N)$. The joint action space, $\mathcal{A}_\text{defender} = \mathcal{E}_1 \times \cdots \times \mathcal{E}_N$, grows exponentially with the number of defenders, making exhaustive enumeration computationally intractable. 

We decompose a joint action into an ordered sequence of individual decisions. This is achieved by constructing a tree, $\mathcal{T}_{\text{defender}}$, where a joint action is equivalent to traversing a path from the root to a leaf. The principles for constructing the tree are as follows:
\begin{itemize}
\item \textbf{Tree Structure and Nodes:} The decision tree, $\mathcal{T}_{\text{defender}}$, is structured with a depth of $N$, where each level from $1$ to $N$ corresponds to the decision made by a specific defender. A node $z \in \mathcal{T}_{\text{defender}}$ is characterized by two components:

\textbf{1. Action Component ($a_z^{def}$):} This represents the specific action that leads to node $z$ from its parent. For a node $z$ at depth $i$ ($1 \le i \le N$), its action component $a_z^{def}$ is an element of $\mathcal{E}_i$, corresponding to the choice of defender $i$. The root node $z_\text{root} = \emptyset$ has a null action component.

\textbf{2. History Component ($h_z^{def}$):} This provides a cumulative record of the decisions made along the path from the root. The history $h_z^{def}=(a_{z_1}^{def}, \ldots, a_{z_i}^{def})$ of a node $z$ at depth $i$ is the sequence of actions taken by the first $i$ defenders. The history of the root node is an empty sequence, $h_{z_{\text{root}}}^{def} = \emptyset$.
\item \textbf{Children Construction:} The tree is expanded level by level. Any non-leaf node $z$ at depth $i$ (where $0 \le i < N$) is connected to a set of children representing all possible actions for the $i+1$ defender. For each available action $e \in \mathcal{E}_{i+1}$, a child node $z_{\text{child}}^{def}$ is generated. Its action component is set to $a_{z_{\text{child}}}^{def} = e$, and its history is updated by concatenating the new action: $h_{z_{\text{child}}}^{def} = h_z^{def} \oplus (e)$.
\end{itemize}

\paragraph{Path-to-Action Mapping} Any complete path from the root to a leaf $(z_0, z_1, \ldots, z_N)$ with $z_0 = z_{\text{root}}$, uniquely specifies the joint action $a_{\text{defender}} = (a_{z_1}^{def}, \ldots, a_{z_N}^{def})$. Thus, the intractable joint action space can be mapped one-to-one onto a path from the root to a leaf in $\mathcal{T}_{\text{defender}}$.

\paragraph{Action Probability}
The tree-based action representation factorizes the joint action policy into a product of conditional probabilities. For each decision in the sequence, the policy network $\pi_\phi$, parameterized by $\phi$, determines the action probabilities. Specifically, at any non-leaf node $z \in \mathcal{T}_{\text{defender}}$ at depth $i-1$ (for $1 \le i \le N$), the policy is conditioned on the history of previously selected actions $h_z^{def}$. It then outputs a probability distribution over the action space $\mathcal{E}_i$ for the current defender. Following the chain rule of probability, the likelihood of a complete joint action $a_\text{defender}$ is the product of these sequential conditional probabilities:  $\pi_\phi(a_\text{defender}) = \prod_{i=1}^{N} \pi_\phi(a_{z_i}^{def} \mid h_{z_{i-1}}^{def})$

% where $e_{<i}$ denotes the history $(e_1, \ldots, e_{i-1})$. The probability for the first action, $\pi_\phi(e_1 \mid \emptyset)$, is conditioned on an empty history, corresponding to the decision at the root of $\mathcal{T}_{\text{defender}}$.
% It is not necessary to construct $\mathcal{T}_{\text{defender}}$ in advance; we only need to dynamically generate child nodes as needed, which allows $\mathcal{T}_{\text{defender}}$ to represent all possible actions.

\subsection{Action Mask}

\paragraph{Attacker Action Mask} A challenge in our sequential decision-making framework is the policy network $\pi_\theta$ has to have a fixed-size output. To resolve this, we employ an action mask. The policy network is designed with an output dimension corresponding to the maximum node degree, $\Delta_{\max}$, in the graph. However, at any given node $z_{i-1}$ in a path, the set of valid successor nodes (children) is often smaller than $\Delta_{\max}$. The action mask is a binary vector that constrains the policy to this feasible set. Specifically, for a decision at node $z_{i-1}$, the mask identifies which dimensions of the network's output correspond to valid children. The logits for all invalid actions are then masked (e.g., set to $-\infty$) before the softmax normalization. This procedure ensures that the final probability distribution is exclusively over the set of valid actions, guaranteeing that any sampled action is valid.

This masking mechanism is what makes our tree-based sampling approach computationally tractable. By keeping the policy network's output dimension small and fixed at $\Delta_{\max}$—a value typically orders of magnitude smaller than the total action space size—we transform an intractably large, high-dimensional discrete action space into a sequence of low-dimensional decisions. This enables the assignment of explicit probabilities to each potential attacker strategy and supports efficient policy optimization, even when the total number of strategies is too large to enumerate.

\paragraph{Defender Action Masking} For the policy network of the defense team, we also need to address the challenge that the output layer size is fixed. The policy network, $\pi_\phi$, is designed to output a probability distribution over a unified action space, $\mathcal{E}_\text{defender} = \bigcup_{i=1}^{N} \mathcal{E}_i$. However, at each step $i$, the acting defender is restricted to its individual action set $\mathcal{E}_i \subseteq \mathcal{E}_\text{defender}$. To enforce this constraint, we employ an action masking mechanism.

At each decision node $z$ at depth $i-1$ in the tree $\mathcal{T}_{\text{defender}}$, we generate a binary mask vector for defender $i$. This mask has a dimension of $|\mathcal{E}_\text{defender}|$, aligning with the network's output. An entry in the mask is set to 1 if the corresponding action belongs to the set of valid choices for defender $i$, and 0 otherwise. This set of valid choices is primarily $\mathcal{E}_i$, but can be further restricted based on the history $h_z^{def}$ to prune actions that are contextually infeasible. The mask is applied to the network's raw output (logits) before the softmax operation by setting the logits of invalid actions (where the mask is 0) to $-\infty$. This procedure ensures that the policy only assigns non-zero probabilities to valid actions at each step, guaranteeing that any sampled joint action $a_\text{defender}$ is an element of the valid joint action space $\mathcal{A}_\text{defender}$.

\section{Derivation of Tree-Based NAL}
\label{appendix:derivation_TSO}

In this section, we present a detailed derivation of Tree-Based NAL.

\textbf{Step 1:} Start with the NAL Loss Function

We begin with the definition of $\mathcal{L}_\text{NAL}^{\tau}(\bm{x})$:
$$
\mathcal{L}_\text{NAL}^{\tau}(\bm{x}) = \sum_{i \in \mathcal{P}} \langle \text{sg}[\bm{F_i}^{\tau, \bm{x}} - \langle \bm{F_i}^{\tau, \bm{x}}, \hat{\bm{x}}_i \rangle \mathbf{1}], \bm{x}_i \rangle
$$
The notation $\langle \bm{A}, \bm{B} \rangle$ represents the inner product of two vectors $\bm{A}$ and $\bm{B}$.

\textbf{Step 2:} Expand the Inner Product

The core of the derivation is to expand the inner product into its element-wise summation form. Let's focus on the term inside the summation over players $i \in \mathcal{P}$. Define two vectors:
$\bm{A} = \text{sg}[\bm{F_i}^{\tau, \bm{x}} - \langle \bm{F_i}^{\tau, \bm{x}}, \hat{\bm{x}}_i \rangle \mathbf{1}]$ and $\bm{B} = \bm{x}_i$.
These two vectors have $|\mathcal{A}_i|$ elements. Therefore, we will sum from $k=0$ to $|\mathcal{A}_i|-1$. The inner product becomes:
$$
\sum_{k=0}^{|\mathcal{A}_i|-1} A(k) \cdot B(k)
$$

\textbf{Step 3:} The Components of the Vectors

For vector $\bm{A} = \text{sg}[\bm{F_i}^{\tau, \bm{x}} - \langle \bm{F_i}^{\tau, \bm{x}}, \hat{\bm{x}}_i \rangle \mathbf{1}]$: The term $\langle \bm{F_i}^{\tau, \bm{x}}, \hat{\bm{x}}_i \rangle$ is a scalar value (a single number). The vector $\mathbf{1}$ is a vector of ones. Therefore, the $k$-th element of the vector $\bm{F_i}^{\tau, \bm{x}} - \langle \bm{F_i}^{\tau, \bm{x}}, \hat{\bm{x}}_i \rangle \mathbf{1}$ is the $k$-th element of $\bm{F_i}^{\tau, \bm{x}}$ minus this scalar. Then we get:
$$A(k) = \text{sg}\left[ \bm{F}_i^{\tau, \bm{x}}(k) - \langle \bm{F_i}^{\tau, \bm{x}}, \hat{\bm{x}}_i \rangle \right]$$.

For vector $\bm{B} = \bm{x}_i$, the $k$-th element of $\bm{x}_i$ is $\sigma_k$. So $B(k) = (\bm{x}_i)_k = \sigma_k$. 

Therefore, we have:
$$
\langle \text{sg}[\bm{F_i}^{\tau, \bm{x}} - \langle \bm{F_i}^{\tau, \bm{x}}, \hat{\bm{x}}_i \rangle \mathbf{1}], \bm{x}_i \rangle = \sum_{k=0}^{|\mathcal{A}_i|-1} \text{sg}\left[ \bm{F}_i^{\tau, \bm{x}}(k) - \langle \bm{F_i}^{\tau, \bm{x}}, \hat{\bm{x}}_i \rangle \right] \cdot \sigma_k
$$

\textbf{Step 4:} Integrate Tree-Based Action Representation

Given the Tree-Based Action Representation, the definition for $\sigma_k$ is 
$\sigma_k = \prod_{j \in S_k} \dot{\sigma}_j$.

Substituting this expression for $\sigma_k$ into the equation from Step 3 yields:
$$
\sum_{k=0}^{|\mathcal{A}_i|-1} \text{sg}\left[ \bm{F}_i^{\tau, \bm{x}}(k) - \langle \bm{F_i}^{\tau, \bm{x}}, \hat{\bm{x}}_i \rangle \right] \cdot \sigma_k = \sum_{k=0}^{|\mathcal{A}_i|-1}\text{sg}\left[ \bm{F}_i^{\tau, \bm{x}}(k) - \langle \bm{F}_i^{\tau, \bm{x}}, \hat{\bm{x}}_i \rangle \right] \prod_{j \in S_k} \dot{\sigma}_j
$$

This completes the derivation. This results in the final expression for the tree-based NAL:

$$\begin{aligned}
\mathcal{L}_{\text{TSO}}^\tau(\bm{x})
&= \sum_{i \in \mathcal{P}}\left\langle \text{sg}\left[\bm{F}_i^{\tau, \bm{x}} - \langle \bm{F}_i^{\tau, \bm{x}}, \hat{\bm{x}}_i \rangle \mathbf{1}\right], \bm{x}_i \right\rangle \\
&= \sum_{i \in \mathcal{P}}\sum_{k=0}^{|\mathcal{A}_i|-1} \text{sg}\left[ \bm{F}_i^{\tau, \bm{x}}(k) - \langle \bm{F}_i^{\tau, \bm{x}}, \hat{\bm{x}}_i \rangle \right] \sigma_k \\
&= \sum_{i \in \mathcal{P}}\sum_{k=0}^{|\mathcal{A}_i|-1} \text{sg}\left[ \bm{F}_i^{\tau, \bm{x}}(k) - \langle \bm{F}_i^{\tau, \bm{x}}, \hat{\bm{x}}_i \rangle \right] \prod_{j \in S_k} \dot{\sigma}_j
\end{aligned}$$

\section{Proof of Proposition 1}
\label{appendix:equivalance_proof}

% \begin{proposition}
% For any edge $j$ in $\mathcal{T}_{i}$, the first-order gradient of the tree-based NAL equals zero if and only if the first-order gradient of NAL is $\bm{0}$, i.e.,
% \begin{equation}
% \setlength\abovedisplayskip{2pt}
% \setlength\belowdisplayskip{2pt}
% \frac{\partial \mathcal{L}_\text{TSO}^\tau(\bm{x})}{\partial \dot{\sigma}_j} = 0 \quad \forall \, j \, \in E_{tree}^i \; \iff \; \bm{g}_i = \bm{0},
% \end{equation}
% where $\bm{g}_i = [g_{i,0},\, g_{i,1},\, \ldots,\, g_{i, |\mathcal{A}_i|-1}]^\top$.
% \end{proposition}
\begin{proposition*}
\textbf{\ref{proposition:equivalence}}
For any edge $j$ in $\mathcal{T}_{i}$, the first-order gradient of the tree-based NAL equals zero if and only if the first-order gradient of NAL is $\bm{0}$, i.e.,
\begin{equation}
\setlength\abovedisplayskip{2pt}
\setlength\belowdisplayskip{2pt}
\frac{\partial \mathcal{L}_\text{TSO}^\tau(\bm{x})}{\partial \dot{\sigma}_j} = 0 \quad \forall \, j \, \in E_{tree}^i \; \iff \; \bm{g}_i = \bm{0},
\end{equation}
where $\bm{g}_i = [g_{i,0},\, g_{i,1},\, \ldots,\, g_{i, |\mathcal{A}_i|-1}]^\top$.
\end{proposition*}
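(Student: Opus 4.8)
The plan is to prove the two implications separately, leaning on the closed-form edge gradient $\frac{\partial \mathcal{L}_\text{TSO}^\tau(\bm{x})}{\partial \dot{\sigma}_j} = \sum_{k \in \mathcal{A}_i(j)} g_{i,k}\,\sigma_k^{\backslash j}$ derived just above the statement, together with the product structure $\sigma_k = \prod_{j \in S_k} \dot{\sigma}_j$. Throughout I would work with an interior profile $\bm{x} \in \bm{\mathcal{X}}^\circ$, the natural regime for the entropy-regularized utility $u_i^\tau$, so that every edge probability satisfies $\dot{\sigma}_j > 0$ and hence $\sigma_k^{\backslash j} = \prod_{l \in S_k,\, l \neq j} \dot{\sigma}_l > 0$ for every action $k$ and every $j \in S_k$.

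The backward implication ($\Leftarrow$) is immediate and I would dispatch it in one line: if $\bm{g}_i = \bm{0}$, then every summand $g_{i,k}\,\sigma_k^{\backslash j}$ in the edge-gradient formula vanishes, so $\frac{\partial \mathcal{L}_\text{TSO}^\tau(\bm{x})}{\partial \dot{\sigma}_j} = 0$ for all $j \in E_\text{tree}^i$, with no positivity assumption needed.

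The forward implication ($\Rightarrow$) is the substantive direction, and the idea I would use is that \emph{leaf edges isolate individual actions}. By the one-to-one path-to-action mapping established in Section~\ref{subsec:Tree-Based Action Representation}, each leaf of $\mathcal{T}_i$ corresponds to exactly one action index $k$, and the terminal edge entering that leaf lies on precisely one root-to-leaf path, namely $S_k$ (any path through that edge must terminate at the leaf, and the root-to-leaf path in a tree is unique). Hence for such a leaf edge $j$ the set $\mathcal{A}_i(j)$ is the singleton $\{k\}$, and the gradient formula collapses to $\frac{\partial \mathcal{L}_\text{TSO}^\tau(\bm{x})}{\partial \dot{\sigma}_j} = g_{i,k}\,\sigma_k^{\backslash j}$. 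Setting this to zero and dividing by the strictly positive factor $\sigma_k^{\backslash j}$ forces $g_{i,k} = 0$. Since the leaf-to-action correspondence is a bijection, ranging $j$ over all leaf edges yields $g_{i,k} = 0$ for every $k \in \{0, \dots, |\mathcal{A}_i|-1\}$, i.e.\ $\bm{g}_i = \bm{0}$.

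The hard part will be the justification of $\sigma_k^{\backslash j} \neq 0$, which is exactly why I restrict to the interior $\bm{\mathcal{X}}^\circ$: on the boundary a vanishing edge probability elsewhere on the path could zero out $\sigma_k^{\backslash j}$ and sever the implication. A subtlety worth noting is that only the leaf edges are needed to close the argument; the non-leaf edges impose further linear relations on $\bm{g}_i$, but these are automatically satisfied once all $g_{i,k}$ vanish, so they require no separate treatment. If one preferred not to single out leaf edges, an equivalent bottom-up induction on tree depth would also work, but the leaf-edge observation is the most direct route and is what I would present.
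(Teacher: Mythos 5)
Your proof is correct and takes essentially the same route as the paper's: both directions hinge on the observation that a terminal (leaf) edge lies on exactly one root-to-leaf path, so $\mathcal{A}_i(j)$ is a singleton, the edge gradient collapses to $g_{i,k}\,\sigma_k^{\backslash j}$, and strict positivity of the remaining edge probabilities forces $g_{i,k}=0$ for every $k$. The only divergence is that you explicitly restrict to the interior $\bm{\mathcal{X}}^\circ$ to guarantee $\sigma_k^{\backslash j}>0$ for \emph{all} actions, whereas the paper argues positivity only for actions with $\sigma_k>0$ and then applies the conclusion to every action --- so your treatment is, if anything, slightly more careful on this boundary point.
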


\begin{proof}
We prove the equivalence by demonstrating both directions of the implication.

($\implies$) Proof that if all edge gradients are zero, then the first-order gradient of NAL is zero vector.

Assume that $\frac{\partial \mathcal{L}_\text{TSO}^\tau(\bm{x})}{\partial \dot{\sigma}_j} = 0$ for all edges $j \in E_\text{tree}^i$. The proof for this direction leverages the unique path structure of the decision tree $\mathcal{T}_i$. Consider an arbitrary action $k$. This action corresponds to a unique root-to-leaf path $S_k$. Let $j_k^{\text{term}}$ denote the terminal edge of this path, which is the final edge connecting to the leaf node in this path. By the definition of a tree, this terminal edge is exclusive to path $S_k$; no other path $S_{k'}$ for $k' \neq k$ contains $j_k^{\text{term}}$. Consequently, the set of action indices whose paths include this edge, $\mathcal{A}_i(j_k^{\text{term}})$, contains only the single element $\{k\}$.

By our initial assumption, the gradient for this specific terminal edge is zero:
$$\frac{\partial \mathcal{L}_\text{TSO}^\tau(\bm{x})}{\partial \dot{\sigma}_{j_k^{\text{term}}}} = \sum_{k' \in \mathcal{A}_i(j_k^{\text{term}})} g_{i,k'} \cdot \sigma_{k'}^{\backslash j_k^{\text{term}}} = 0$$
Since $\mathcal{A}_i(j_k^{\text{term}}) = \{k\}$, the summation collapses to a single term:
$$g_{i,k} \cdot \sigma_k^{\backslash j_k^{\text{term}}} = 0$$

For any action $k$ with a non-zero probability ($\sigma_k > 0$), all its constituent edge probabilities $\{\dot{\sigma}_j\}_{j \in S_k}$ must be positive. It follows that $\sigma_k^{\backslash j_k^{\text{term}}}$, being a product of a subset of these positive probabilities, must also be positive. Therefore, the equation can only hold if: $g_{i,k} = 0$

This logic applies to any action $k \in \{0, \dots, |\mathcal{A}_i|-1\}$. Since every action corresponds to a unique leaf node and thus has a unique terminal edge, we can apply this argument to each action's terminal edge to show that $g_{i,k}=0$ for all $k$. Thus, if the gradients for all edge parameters are zero, the first-order gradient of NAL $[g_{i,0}, \dots, g_{i, |\mathcal{A}_i|-1}]^\top$ must be the zero vector.

($\impliedby$) Proof that if the first-order gradient of NAL is zero vector, then all edge gradients are zero.

This direction is straightforward. Assume the first-order gradient of NAL is zero vector, meaning $g_{i,k} = 0$ for all action indices $k \in \{0, \dots, |\mathcal{A}_i|-1\}$. Recall the general formula for the gradient with respect to an arbitrary edge probability $\dot{\sigma}_j$:
$$\frac{\partial \mathcal{L}_\text{TSO}^\tau(\bm{x})}{\partial \dot{\sigma}_j} = \sum_{k \in \mathcal{A}_i(j)} g_{i,k} \cdot \sigma_k^{\backslash j}$$
Since we have assumed $g_{i,k} = 0$ for all $k$, every term in the summation is zero.
$$\frac{\partial \mathcal{L}_\text{TSO}^\tau(\bm{x})}{\partial \dot{\sigma}_j} = \sum_{k \in \mathcal{A}_i(j)} (0) \cdot \sigma_k^{\backslash j} = 0$$
This holds for any edge $j \in E_\text{tree}^i$. Therefore, if the first-order gradient of NAL is zero vector, the gradients for all of our underlying edge probability parameters must also be zero.

Having proven both directions, we conclude that the stationary points of our TSO loss function correspond directly to the stationary points of the NAL objective.
\end{proof}

\section{Details about Sample-and-Prune Mechanism}
\label{appendix:sample}

\paragraph{Failure of Naive Sampling in UNSGs} Consider the following problem. We use an neural network (NN) with parameters $\theta$, where the input is $x$ and the output is a $N$-dimensional vector. After applying the Softmax function, we obtain a probability distribution $Y = \text{softmax}(f(x; \theta))$, such that $Y$ is a probability vector and $\sum_{i=0} Y[i] = 1$. 

For each input/output pair $(x_1, x_2)$, the network produces $(Y_1, Y_2)$. Action sampling is performed by drawing actions (indices) $k_1$ and $k_2$ from the categorical distributions defined by $Y_1$ and $Y_2$, respectively, i.e., $k_1 \sim \text{Categorical}(Y_1)$ and $k_2 \sim \text{Categorical}(Y_2)$. The probability of selecting $k_1$ is $q(x_1) = Y_1[k_1]$ and the probability of selecting $k_2$ is $q(x_2) = Y_2[k_2]$. The objective is to maximize $\frac{(r-\bar{r})}{p} \cdot q(x_1) q(x_2)$ or, equivalently, to minimize the loss $L(\theta) = \frac{(\bar{r}-r)}{p} \cdot q(x_1) q(x_2)$, where $p$ is the probability from an $\epsilon$-greedy sampling strategy, which is detached from the gradient computation.

Since $\frac{(\bar{r}-r)}{p}$ does not participate in the differentiation, we represent it as a constant $C$. Let $z_1$ denote the pre-Softmax output (logits) of the network, and $Y_1 = \text{softmax}(z_1)$. Then, the derivative of $Y_1[k_1]$ with respect to its corresponding logit $z_{1}[k_1]$ is given by
$$\frac{\partial Y_1[k_1]}{\partial z_{1}[k_1]} = Y_1[k_1] \cdot (1 - Y_1[k_1]).$$
This derivative is a key component of the gradient $\nabla_\theta Y_1[k_1]$ via the chain rule. We can see that when $Y_1[k_1] \to 1$, $(1 - Y_1[k_1]) \to 0$, so $\frac{\partial Y_1[k_1]}{\partial z_{1}[k_1]} \to 0$; when $Y_1[k_1] \to 0$, $\frac{\partial Y_1[k_1]}{\partial z_{1}[k_1]} \to 0$ as well. 

If, over several consecutive batches, the sampled $(k_1, k_2)$ happen to receive very small $C$ (good results), the network will experience strong positive feedback, and gradient updates will keep increasing $Y_1[k_1]$ and $Y_2[k_2]$. This positive feedback loop causes $Y_1[k_1]$ and $Y_2[k_2]$ to rapidly approach 1. Once $Y_1[k_1] \approx 1$, the output distribution $Y_1$ becomes very ``sharp'' (low-entropy), with almost all probability mass concentrated on the index $k_1$. 

At this point, a problem arises: even if a subsequently sampled $(k_1, k_2)$ receives a very large $C$, and we would expect the network to quickly reduce $Y_1[k_1]$, the gradient $\nabla_\theta Y_1[k_1]$ becomes extremely small because $\frac{\partial Y_1[k_1]}{\partial z_{1}[k_1]} \approx 0$. The large $C$ multiplied by this nearly zero gradient results in a negligible parameter update $\Delta \theta$, making it almost impossible for the network to learn from this new, potentially corrective data.

This phenomenon is particularly pronounced in UNSGs. The reason is that, both the attacker and the defender have vast action spaces. The disparity between the typically small batch size used for training and the large-scale of the action space leads to high variance in the sampled action-reward pairs. Consequently, it becomes highly probable that, "the sampled actions happen to receive good results, the network will experience strong positive feedback, and gradient updates will keep increasing." This premature positive feedback loop causes the policy to converge to a local optimum. Critically, once trapped, even a high exploration rate (e.g., a large $\epsilon$ in an $\epsilon$-greedy strategy) is often insufficient for the policy to escape this suboptimal state.

This problem is addressed with an additional constraint: first, we sample an action pair $(k_1, k_2)$; then, we resample a different pair $(k_1', k_2')$ such that $(k_1', k_2') \neq (k_1, k_2)$; and finally, we use the latter pair to compute the loss. When the probability $q(k_1) \cdot q(k_2)$ of an action pair $(k_1, k_2)$ becomes high, its chance of being selected in the first sampling increases accordingly. As a result, the probability that it is excluded from gradient updates also rises. Consequently, gradient updates are applied more frequently to other action pairs with lower probabilities. The increase in the probabilities of these other action pairs effectively ``steals'' probability mass from $(k_1, k_2)$, thus preventing the unbounded growth of $q(k_1) \cdot q(k_2)$.

\paragraph{Why the Sampling-and-Pruning Mechanism Works} The efficacy of this mechanism lies in its self-regulating nature. When the probability of a specific action $a_k$ becomes high, it is more likely to be selected in the initial sampling step. According to our mechanism, being selected in this first step prunes it from the set of candidates for the gradient update in that iteration. As a result, gradient updates are more frequently applied to other, less probable actions. The reinforcement of these alternative actions causes their probabilities to rise, which "steals" probability mass from the dominant action $a_k$. This process inherently prevents any single action's probability from growing uncontrollably towards 1, thus ensuring the network remains responsive to both positive and negative feedback across the entire action space.

\paragraph{Sample-and-Prune Mechanism Does Not Affect the NE Convergence}
% \begin{proposition}
% The use of the Sample-and-Prune mechanism for action sampling does not affect the equivalence between $\mathcal{L}_\text{TSO}^\tau(\bm{x})$ and $\mathcal{L}_{\text{NAL}}^{\tau}(\bm{x})$.
% \end{proposition}

\begin{proposition*}
\textbf{\ref{sampling_proposition}}
The use of the Sample-and-Prune mechanism for action sampling does not affect the equivalence between $\mathcal{L}_\text{TSO}^\tau(\bm{x})$ and $\mathcal{L}_{\text{NAL}}^{\tau}(\bm{x})$.
\end{proposition*}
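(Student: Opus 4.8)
The plan is to show that the Sample-and-Prune mechanism perturbs the expected gradient of $\hat{\mathcal{L}}_\text{TSO}^\tau$ only by a strictly positive per-action rescaling, so its zero-set is unchanged and the equivalence of Proposition~\ref{proposition:equivalence} carries over unaltered. Fix a player $i$ and an edge $j \in E_\text{tree}^i$. First I would write down the expectation of the per-edge stochastic gradient $\partial\hat{\mathcal{L}}_\text{TSO}^\tau/\partial\dot{\sigma}_j$ generated by Algorithm~\ref{alg:our algorithm}, taken over the full two-step process. The re-sampled action $a_{k'}$ is drawn from the pruned, renormalized distribution $\bm{x}'_i$ conditioned on the initially sampled (then pruned) action $a_k$, while the opponent actions $\bm{a}_{-i}$ come from the first sample and give $\mathbb{E}[r_i \mid a_{k'}] = \bm{F}_i^{\tau,\bm{x}}(a_{k'})$ up to an additive constant that cancels inside $g_{i,k}$.

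Second, I would evaluate the expectation in the order ``re-sample, then first-sample.'' Conditioning on the pruned action $a_k$, the importance weight $1/p = 1/\bm{x}'_i(a_{k'})$ cancels the conditional sampling probability, so the inner expectation equals $\sum_{k' \neq k} g_{i,k'}\,\partial\sigma_{k'}/\partial\dot{\sigma}_j$. Taking the outer expectation over $a_k \sim \bm{x}_i$ and using $\sum_k \sigma_k = 1$ then subtracts one ``self'' term $\sigma_k\,g_{i,k}\,\partial\sigma_k/\partial\dot{\sigma}_j$ per action, yielding
\[
\mathbb{E}\!\left[\frac{\partial\hat{\mathcal{L}}_\text{TSO}^\tau}{\partial\dot{\sigma}_j}\right] = \sum_{k \in \mathcal{A}_i(j)} (1-\sigma_k)\,g_{i,k}\,\sigma_k^{\backslash j}.
\]
This is exactly the Proposition~\ref{proposition:equivalence} gradient $\sum_{k\in\mathcal{A}_i(j)} g_{i,k}\,\sigma_k^{\backslash j}$, with each term multiplied by the extra factor $(1-\sigma_k)$ contributed by pruning action $k$ from the re-sampling support.

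Third, I would note that $(1-\sigma_k)$ is strictly positive: the entropy regularization together with the $\epsilon$-greedy mixing in Algorithm~\ref{alg:our algorithm} keeps every strategy in the interior $\bm{\mathcal{X}}^\circ$, so $\sigma_k \in (0,1)$ for all $k$. A strictly positive per-action rescaling cannot change the zero-set, so I can re-run the terminal-edge argument of Proposition~\ref{proposition:equivalence} verbatim: for the terminal edge $j_k^{\text{term}}$ of action $k$ we have $\mathcal{A}_i(j_k^{\text{term}}) = \{k\}$, so vanishing of every edge gradient forces $(1-\sigma_k)\,g_{i,k}\,\sigma_k^{\backslash j_k^{\text{term}}} = 0$; since $(1-\sigma_k)>0$ and $\sigma_k^{\backslash j_k^{\text{term}}}>0$, this gives $g_{i,k}=0$ for every $k$. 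The converse is immediate because each summand carries a factor $g_{i,k}$. Hence $\mathbb{E}[\partial\hat{\mathcal{L}}_\text{TSO}^\tau/\partial\dot{\sigma}_j]=0$ for all $j$ if and only if $\bm{g}_i=\bm{0}$, which is precisely the equivalence asserted by Proposition~\ref{proposition:equivalence}, so the Sample-and-Prune mechanism leaves it intact.

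The hard part will be the second step: pruning is \emph{not} an innocuous change of sampling distribution, and one must show that the resulting ``self'' term $-\sum_k \sigma_k g_{i,k}\,\partial\sigma_k/\partial\dot{\sigma}_j$ does not produce an uncontrolled bias but instead merges with the unbiased part into the clean reweighting $(1-\sigma_k)$. Making this rigorous hinges on (i) cleanly separating the opponent-induced randomness in $r_i$, which averages to $\bm{F}_i^{\tau,\bm{x}}$, from the two action-sampling stages, and (ii) verifying strict positivity of $(1-\sigma_k)$ so that the zero-set is provably preserved; these two facts are exactly what let Proposition~\ref{proposition:equivalence}'s terminal-edge argument be reused without modification.
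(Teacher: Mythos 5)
Your proof is correct, but it takes a genuinely different and more quantitative route than the paper. The paper's own proof is a short, high-level argument: it observes that NAL's convergence guarantee is agnostic to the choice of sampling policy $\hat{\bm{x}}_i$ (citing the NAL paper), asserts that Sample-and-Prune ``only changes the sampling strategy,'' and concludes the equivalence is untouched. You instead compute the expectation of the stochastic edge gradient produced by the two-stage sampling in Algorithm~\ref{alg:our algorithm} and show it equals the Proposition~\ref{proposition:equivalence} gradient with each term rescaled by $(1-\sigma_k)$, then rerun the terminal-edge argument using strict positivity of these factors. Your computation is right: because the importance weight $1/p_k$ in Algorithm~\ref{alg:our algorithm} corrects only for the \emph{conditional} resampling probability (not the marginal probability of $a_{k'}$ under the full two-stage process), the expected gradient field is genuinely distorted by the factors $(1-\sigma_k)$ --- so the paper's claim that the mechanism is merely a change of sampling policy glosses over a real effect, and the conclusion survives only because the distortion is a strictly positive per-coordinate rescaling, which is exactly the fact you establish and the paper never does. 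Your approach thus buys a self-contained, rigorous proof that also explains mechanistically why the method works (high-probability actions have their gradient contribution damped by $1-\sigma_k$, matching the paper's informal ``probability stealing'' discussion in the appendix), at the cost of being longer; the paper's approach buys brevity by outsourcing the burden to the cited NAL theory, but as written it is incomplete. Two minor points to tighten in your write-up: the strict inequality $\sigma_k<1$ follows most cleanly from the softmax/tree parameterization keeping every action probability in $(0,1)$ (the $\epsilon$-greedy mixing in Algorithm~\ref{alg:our algorithm} is applied only to the \emph{resampling} distribution $\bm{x}'_k$, not to $\bm{x}_k$ itself, so it is not the right thing to invoke); and you should note that the baseline $\langle \bm{F}_i^{\tau,\bm{x}},\hat{\bm{x}}_i\rangle$ implicit in $g_{i,k}$ is now taken with $\hat{\bm{x}}_i$ equal to the effective Sample-and-Prune marginal, which is harmless since NAL permits any $\hat{\bm{x}}_i\neq\bm{0}$ and $\bm{g}_i=\bm{0}$ still characterizes constancy of $\bm{F}_i^{\tau,\bm{x}}$ across actions.
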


\begin{proof}
The NAL algorithm relies on a sampling policy, denoted as $\hat{\bm{x}}_i$ for player $i$, to guide exploration; this is commonly an $\epsilon$-greedy policy. Player $i$'s strategy, $\bm{x}_i$, is based on a first-order gradient estimator constructed from the utility of actions drawn from $\hat{\bm{x}}_i$. 

The Sample-and-Prune mechanism alters this sampling distribution. Under this mechanism, the probability of sampling an action $a_{k'}$ for the update is conditioned on a different, pre-sampled action $a_k$. While this modification changes the sampling probabilities compared to a standard $\epsilon$-greedy approach, the convergence of NAL is fundamentally agnostic to the particulars of the sampling policy \cite{mengreducing}. As the Sample-and-Prune mechanism only changes the sampling strategy, the theoretical guarantee of convergence to an NE remains intact.
\end{proof}

\section{Overview of TSO}
\label{sec:Overview of TSO}

We now provide the overview of our TSO, as demonstrated in \Cref{alg:our algorithm}. The goal of TSO is to minimize the following loss function:
\begin{equation}
\setlength\abovedisplayskip{2pt}
\setlength\belowdisplayskip{2pt}
\thinmuskip=0mu
\medmuskip=0mu
\thickmuskip=0mu
\spaceskip=-0pt 
    \begin{aligned}
        \mathcal{L}_{\text{TSO}}^\tau(\theta,\phi)
        &= \sum_{k \in \mathcal{P}}\left\langle \text{sg}\left[\bm{F}_k^{\tau, \bm{x}^{\theta,\phi}} - \langle \bm{F}_k^{\tau, \bm{x}^{\theta,\phi}}, \hat{\bm{x}}_k \rangle \mathbf{1}\right], \bm{x}^{\theta,\phi}_k \right\rangle.
    \end{aligned}
\end{equation}
To achieve this goal, TSO comprises two main steps: sampling and updating.

\textbf{Step 1.} The sampling phase, described in lines 3–16, \Cref{alg:our algorithm}. At each iteration $t$, we initialize the buffer $\mathcal{M}_k$ and the variable $v_k$, which are used to estimate the value of $ \mathcal{L}_{\text{TSO}}^\tau(\theta,\phi)$. Subsequently, for each player $k \in \mathcal{P}$, we generate $S$ samples. In each sample, an action $a_k$ is drawn for each player based on the strategy profile $\bm{x}^{\theta,\phi}$, utilizing our Tree-Based Action Representation (line 6, \Cref{alg:our algorithm}). This yields the action profile $\bm{a} \leftarrow (a_k)_{k \in \mathcal{P}}$ (line 7, \Cref{alg:our algorithm}). For any player $k$, $\bm{a}_{-k} = (a_j)_{j \in \mathcal{P},\, j \neq k}$ serves as an unbiased estimate for environmental dynamics. Next, as mentioned in our Tree-Based Action Representation, by employing our Tree-Based Action Representation, we sample for each player $k$ an alternative action $a_{k^{\prime}} \neq a_k$, along with its sampling probability $p_k$. The resulting unbiased estimate of $\bm{F}^{\tau, \bm{x}^{\theta,\phi}}_k(a_{k^{\prime}})$ is given by $r_k = -\mathcal{G}(k, a_{k^{\prime}}, {a}_{-k})+ \tau \log \bm{x}^{\theta,\phi}_k(a_{k^{\prime}})$ for all $k \in \mathcal{P}$, where $\mathcal{G}$ is the simulator that returns the payoff to player $k$ for the joint action $(a_{k^{\prime}}, \bm{a}_{-k})$. Finally, the tuple $[k,\ a^{\prime}_k,\ r_k,\ p_k]$ is stored in the buffer $\mathcal{M}_k$ (line 14, \Cref{alg:our algorithm}), and $v_k$ is updated according to $v_k \leftarrow v_k + r_k$ (line 15, \Cref{alg:our algorithm}).

\textbf{Step 2.} The updating phase, described in lines 17–29, \Cref{alg:our algorithm}. We first initialize the estimator for $\mathcal{L}_{\text{TSO}}^\tau(\theta,\phi)$ as $\hat{\mathcal{L}}_{\text{TSO}}^\tau(\theta,\phi) \leftarrow 0$ and normalize $v_i$ by setting $v_i \leftarrow \frac{v_i}{S}$ (lines 17-18, \Cref{alg:our algorithm}). Immediately, we get that the expectation $\mathbb{E}[v_i]$ corresponds to $ \langle \bm{F}^{\tau, \bm{x}^{\theta,\phi}}_i , \hat{\bm{x}}_i \rangle$. Additionally, we use the tuples in $\mathcal{M}_i$ (line 20, \Cref{alg:our algorithm}) to estimate $\bm{F}^{\tau, \bm{x}^{\theta,\phi}}_k  - \langle \bm{F}^{\tau, \bm{x}^{\theta,\phi}}_k  , \hat{\bm{x}}_k \rangle \bm{1}$ through the computation $\bm{g}^s_k \leftarrow \frac{r^s_k - v_k }{p^s_k} \bm{e}_{a^s_k}$ (line 22, \Cref{alg:our algorithm}), where $\bm{e}_{a^s_i}$ is a vector whose the coordinate $a^s_i$ is $1$ and all other coordinates are $0$. Obviously, we have that $\mathbb{E}[\bm{g}^s_k] = \bm{F}^{\tau, \bm{x}^{\theta,\phi}}_k  - \langle \bm{F}^{\tau, \bm{x}^{\theta,\phi}}_k  , \hat{\bm{x}}_k \rangle \bm{1}$. The estimator {\small$\hat{\mathcal{L}}_{\text{TSO}}^\tau(\theta,\phi)$} is updated via {\small$\hat{\mathcal{L}}_{\text{TSO}}^\tau(\theta,\phi) \leftarrow \hat{\mathcal{L}}_{\text{TSO}}^\tau(\theta,\phi) + \langle \bm{g}^s_i, {\bm{x}}^{\theta,\phi}_i \rangle$} (line 23, \Cref{alg:our algorithm}). Since $\mathbb{E}[\bm{g}^s_k] = \bm{F}^{\tau, \bm{x}^{\theta,\phi}}_i  - \langle \bm{F}^{\tau, \bm{x}^{\theta,\phi}}_k  , \hat{\bm{x}}_k \rangle \bm{1}$ and ${\bm{x}}^{\theta,\phi}_k$ is known, it follows that $\frac{1}{S}\mathbb{E}[\hat{\mathcal{L}}_{\text{TSO}}^\tau(\theta,\phi)] = \mathcal{L}^{\tau}_{NAL}(\theta,\phi)$. Therefore, $\hat{\mathcal{L}}_{\text{TSO}}^\tau(\theta,\phi)$ provides an unbiased estimate of $\mathcal{L}^{\tau}_{NAL}(\theta,\phi)$. The estimator $\hat{\mathcal{L}}_{\text{TSO}}^\tau(\theta,\phi)$ is then passed to the optimizer $\mathcal{OPT}$ for updating $\theta\ \text{and}\ \phi$ (line 26, \Cref{alg:our algorithm}). Lastly, as in \citet{mengreducing}, if $t \% T_u = 0$ (line 27, \Cref{alg:our algorithm}), the parameters $\eta$ and $\tau$ are updated as $\eta \leftarrow \alpha \eta$ and $\tau \leftarrow \beta \tau$ (line 27, \Cref{alg:our algorithm}) to enhance the stability of TSO, where $0 < \alpha, \beta < 1$

\section{Notation}
\label{appendix:tso_detail}

\begin{table*}[htbp]
\centering
\begin{tabular}{|>{\centering\arraybackslash}p{3.5cm}|p{9cm}|}
\hline
\textbf{Notation/Definition} & \textbf{Description} \\
\hline
$G = (V, E)$ & The game graph, where $V$ are intersections and $E$ are roads. \\
\hline
$V$ & The set of vertices (intersections). \\
\hline
$E$ & The set of edges (roads). \\
\hline
$N$ & The number of defenders in the defense team. \\
\hline
$V_{\text{start}}$ & The set of starting vertices for the attacker. \\
\hline
$V_{\text{target}}$ & The set of target vertices for the attacker. \\
\hline
$a_\text{attacker}$ & An action for the attacker, which is a path from a start to a target vertex. \\
\hline
$\mathcal{A}_\text{attacker}$ & The attacker's action space (the set of all valid simple paths). \\
\hline
$m$ & An index for a defender, $m \in \{1, \dots, N\}$. \\
\hline
$e_m$ & An edge selected by defender $m$. \\
\hline
$\mathcal{E}_m$ & The individual action space for defender $m$, $\mathcal{E}_m \subseteq E$. \\
\hline
$a_\text{defender}$ & A team action for the defenders, a tuple of selected edges. \\
\hline
$\mathcal{A}_\text{defender}$ & The defense team's action space, $\times_{i=1}^{N} \mathcal{E}_i$. \\
\hline
$U$ & The target value function, $U: V_{\text{target}} \to \mathbb{R}^+$. \\
\hline
$\text{target}(a_\text{attacker})$ & The endpoint of the attacker's chosen path. \\
\hline
$u_\text{attacker}$ & The attacker's utility function. \\
\hline
$u_\text{defender}$ & The defender's utility function. \\
\hline
$\mathcal{P}$ & The set of players, $\{\text{attacker}, \text{defender}\}$. \\
\hline
$i$ & An index for a player, $i \in \mathcal{P}$. \\
\hline
$\bm{x}_i$ & A mixed strategy for player $i$. \\
\hline
$\mathcal{A}_i$ & The pure action set for player $i$. \\
\hline
$\bm{\mathcal{X}}_i$ & The space of all valid mixed strategies for player $i$. \\
\hline
$\bm{x}$ & A strategy profile for the game. \\
\hline
$\bm{\mathcal{X}}$ & The joint strategy space. \\
\hline
$\bm{\mathcal{X}}^\circ$ & The interior of the joint strategy space. \\
\hline
$u_i(\bm{x})$ & The expected utility for player $i$ given strategy profile $\bm{x}$. \\
\hline
$\bm{x}_{-i}$ & The strategies of all players except player $i$. \\
\hline
$\mathrm{dg}(\boldsymbol{x})$ & The duality gap of a strategy profile $\boldsymbol{x}$. \\
\hline
$\bm{x}^*$ & A Nash equilibrium strategy profile. \\
\hline
$u_{i}^{\tau}(\boldsymbol{x})$ & Entropy-regularized utility for player $i$. \\
\hline
$\tau$ & A positive regularization parameter. \\
\hline
$\boldsymbol{F}_{i}^{\tau, \boldsymbol{x}}$ & Negative gradient of the regularized utility, $-\nabla_{\bm{x}_{i}} u_{i}^{\tau}(\boldsymbol{x})$. \\
\hline
$\overline{\boldsymbol{F}}_{i}^{\tau, \boldsymbol{x}}$ & The mean of $\boldsymbol{F}_{i}^{\tau, \boldsymbol{x}}$ over player $i$'s actions. \\
\hline
$\hat{\bm{x}}$ & An arbitrary strategy profile used in the NAL loss. \\
\hline
\end{tabular}
\end{table*}

\begin{table*}[htbp]
\centering
\begin{tabular}{|>{\centering\arraybackslash}p{3.5cm}|p{9cm}|}
\hline
\textbf{Notation/Definition} & \textbf{Description} \\
\hline
$\mathcal{T}_{\text{attacker}}$ & The decision tree constructed to represent the attacker's action space. \\
\hline
$a_z^{att}$ & The action component of a node $z$, representing the vertex visited. \\
\hline
$a_{z_\text{root}}^{att}$ & The action component of the root node, defined as $\emptyset$. \\
\hline
$h_z^{att}$ & The history component of a node $z$, recording the sequence of actions. \\
\hline
$N(a_z^{att})$ & The set of neighbors of the vertex $a_z^{att}$ in the graph $G$. \\
\hline
$d(v', V_{\text{target}})$ & The shortest distance from a vertex $v'$ to any target in $V_{\text{target}}$. \\
\hline
$A_{\text{valid}}(z)$ & The set of valid next actions from the state represented by node $z$. \\
\hline
$z_{\text{child}}$ & A child node generated for a valid action. \\
\hline
$\pi_{\theta}(a_{\text{attacker}})$ & The probability of a specific attacker action $a_{\text{attacker}}$. \\
\hline
$\Delta_{\max}$ & The maximum node degree in the graph. \\
\hline
$\mathcal{T}_{\text{defender}}$ & The decision tree representing the defender's joint action space. \\
\hline
$N$ & The number of defenders (the depth of the tree). \\
\hline
$a_z^{def}$ & The action component of node $z$ (the action taken to reach $z$). \\
\hline
$h_z^{def}$ & The history component of node $z$, i.e., the sequence $(a_{z_1}^{def},\ldots,a_{z_i}^{def})$. \\
\hline
$\pi_\phi$ & The defenders' policy, parameterized by $\phi$. \\
\hline
$\phi$ & The parameters of the defenders' policy network. \\
\hline
$\pi_\phi(a_\text{defender})$ & The probability of a joint defender action $a_\text{defender}$. \\
\hline
$k$ & An index for a specific action in a player's action set. \\
\hline
$\sigma_k$ & The probability of player $i$ selecting action $k$, constructed from edge probabilities. \\
\hline
$\mathcal{T}_i$ & The decision tree for player $i$. \\
\hline
$E_\text{tree}^i$ & The set of all edges in the decision tree $\mathcal{T}_i$. \\
\hline
$\dot{\sigma}_j$ & The fundamental probability of traversing a specific edge $j$ in a decision tree. \\
\hline
$S_k$ & The set of edges in $\mathcal{T}_i$ that form the unique path for action $k$. \\
\hline
$\mathcal{L}_{\text{TSO}}^\tau(\bm{x})$ & The Tree-based Sampling Optimization (TSO) loss function. \\
\hline
$\bm{F}_i^{\tau, \bm{x}}(k)$ & The $k$-th component of the vector $\bm{F}_i^{\tau, \bm{x}}$. \\
\hline
$\frac{\partial \mathcal{L}_\text{TSO}^\tau(\bm{x})}{\partial \sigma_k}$ & The partial derivative of the TSO loss with respect to an action probability $\sigma_k$. \\
\hline
$g_{i,k}$ & A shorthand for the first-order gradient of the NAL loss for player $i$'s action $k$. \\
\hline
$\frac{\partial \mathcal{L}_\text{TSO}^\tau(\bm{x})}{\partial \dot{\sigma}_j}$ & The partial derivative of the TSO loss with respect to an edge probability $\dot{\sigma}_j$. \\
\hline
$\frac{\partial \sigma_k}{\partial \dot{\sigma}_j}$ & The partial derivative of an action probability $\sigma_k$ w.r.t. an edge probability $\dot{\sigma}_j$. \\
\hline
$\sigma_k^{\backslash j}$ & The product of all edge probabilities on path $S_k$ except for edge $j$. \\
\hline
$\mathcal{A}_i(j)$ & The set of action indices whose corresponding paths in $\mathcal{T}_i$ include edge $j$. \\
\hline
$j_k^{\text{term}}$ & The terminal edge of the path $S_k$, which is unique to that path. \\
\hline
\end{tabular}
\end{table*}

\section{Experiments}
\label{appendix:exp}

This section provides further details on our experiments, including the specific game settings, hyperparameter configurations, and a discussion on the results of large-scale UNSGs.

\subsection{Experiment Settings}

To provide a clearer overview and highlight the key distinctions, we summarize the specific configuration of each experiment in Table \ref{tab:game_settings}.

\begin{table*}[!ht]
\centering
\caption{Summary of Experimental Game Environment Configurations. $|A_A|$ and $|A_D|$ denote the size of the attacker's and defender's action spaces, respectively.}
\label{tab:game_settings}
\sisetup{group-separator={,}} % Use comma as a separator for large numbers
% Changed the last two columns from S[...] to c for flexibility
\begin{tabular}{@{}llccS[table-format=6.0]cc>{\centering\arraybackslash}p{2.5cm}@{}}
\toprule
\textbf{Scale} & \textbf{Scenario ID} & \multicolumn{2}{c}{\textbf{Graph Structure}} & \multicolumn{2}{c}{\textbf{Attacker Setup}} & \multicolumn{2}{c}{\textbf{Defense Team Setup}} \\
\cmidrule(lr){3-4} \cmidrule(lr){5-6} \cmidrule(lr){7-8}
& & \#Nodes & \#Edges & {\#Exits, Max Path Len.} & \multicolumn{1}{c}{$|A_A|$} & {\#Defenders, \#Locations} & \multicolumn{1}{c}{$|A_D|$} \\
\midrule
\textbf{Small} & S-1 & 16 & 40 & {1 exit, Len. 9} & 92 & {2 def, 11 loc.} & 121 \\
\midrule
\multirow{4}{*}{\textbf{Medium}} 
& M-1 & \multirow{4}{*}{64} & \multirow{4}{*}{300} & {4 exits, Len. 8} & 1955 & {1 def, 150 loc.} & 150 \\
& M-2 & & & {4 exits, Len. 9} & 6468 & {1 def, 150 loc.} & 150 \\
& M-3 & & & {4 exits, Len. 10} & 20029 & {1 def, 150 loc.} & 150 \\
& M-4 & & & {4 exits, Len. 7} & 513 & {2 def, 150 loc.} & \num{22500} \\
\midrule
\textbf{Large} & L-1 & 10000 & \num{31660} & \multicolumn{2}{c}{cannot feasibly enumerate} & {1 def, \num{31660} loc.} & {$31660$} \\ % Using scientific notation for clarity
\bottomrule
\end{tabular}
\end{table*}

\subsection{Hyperparameters Settings}
\label{appendix:hyper}

We perform hyperparameter tuning for the TSO, PSRO, and NAL methods using a grid search to identify the optimal parameter combinations. The resulting configurations are detailed in Table \ref{tab:TSO_hyper} for TSO, Table \ref{tab:PSRO_hyper} for PSRO, and Table \ref{tab:NAL_hyper} for NAL.

\begin{table*}[htbp]
\centering
\caption{Hyperparameter configurations for our TSO experiments.}
\label{tab:hyperparameters_3col_grouped}
% We use 'l' for the first column (left-aligned), 'c' for the second (center-aligned), 
% and 'p{width}' for the third (paragraph column, left-aligned).
\begin{tabular}{|l|c|p{8.5cm}|}
\hline
\textbf{Hyperparameter} & \textbf{Value} & \textbf{Description} \\
\hline
\texttt{seed} & 3407 & The random seed for ensuring reproducibility. \\
\hline
\texttt{total\_epoch} & 50,000 & The total number of training epochs. \\
\hline
\texttt{batch\_num} & 100 & The number of episodes collected for each training batch. \\
\hline
\texttt{lr\_pursuer} & 1e-4 & The initial learning rate for the defender's network. \\
\hline
\texttt{lr\_evader} & 1e-4 & The initial learning rate for the evader's network. \\
\hline
\texttt{weight\_pursuer\_lr} & 0.8 & The multiplicative decay factor for the defender's learning rate. \\
\hline
\texttt{weight\_evader\_lr} & 0.8 & The multiplicative decay factor for the evader's learning rate. \\
\hline
\texttt{tau} & 0.05 & The initial coefficient for the entropy regularization term. \\
\hline
\texttt{weight\_tau} & 0.7 & The multiplicative decay factor for the temperature \texttt{tau}. \\
\hline
\texttt{update\_percentage} & 0.01 & The interval for decaying learning rates and \texttt{tau}, as a percentage of total epochs. \\
\hline
\texttt{epsilon} & 0.8 & The initial value for the $\epsilon$-greedy exploration strategy. \\
\hline
\texttt{update\_epsilon} & False & A flag to enable the decay of \texttt{epsilon} during training. \\
\hline
\texttt{max\_cpu\_processes} & 4 & The number of parallel CPU processes used for data collection. \\
\hline
\end{tabular}
\label{tab:TSO_hyper}
\end{table*}

\begin{table*}[htbp]
\centering
\caption{Hyperparameter configurations for the PPO-based PSRO experiments.}
\label{tab:hyperparameters_psro_ppo}
% 'l' for left-aligned text, 'c' for centered values, 'p{width}' for a paragraph column
\begin{tabular}{|l|c|p{8.5cm}|}
\hline
\textbf{Hyperparameter} & \textbf{Value} & \textbf{Description} \\
\hline
\texttt{seed} & 3407 & The random seed for ensuring reproducibility. \\
\hline
\texttt{num\_psro\_iteration} & 20 & The total number of PSRO iterations. \\
\hline
\texttt{train\_evader\_number} & 20,000 & Training iterations to compute the evader's best response. \\
\hline
\texttt{train\_pursuer\_number} & 20,000 & Training iterations to compute the defender's best response. \\
\hline
\texttt{eval\_episodes} & 1,000 & Number of episodes used for evaluating policies to build the payoff matrix. \\
\hline
\texttt{lr\_pursuer} & 1e-4 & The learning rate for the defender's optimizer. \\
\hline
\texttt{lr\_evader} & 1e-4 & The learning rate for the evader's optimizer. \\
\hline
\texttt{anneal\_lr} & True & If True, the learning rate is decayed linearly over the training process. \\
\hline
\texttt{num\_steps} & 100 & The number of steps to run in each environment per policy rollout. \\
\hline
\texttt{max\_cpu\_processes} & 1 & The number of parallel CPU processes. \\
\hline
\texttt{gamma} ($\gamma$) & 0.99 & The discount factor. \\
\hline
\texttt{gae\_lambda} ($\lambda$) & 0.95 & The lambda for General Advantage Estimation (GAE). \\
\hline
\texttt{update\_epochs} & 2 & The number of epochs to update the policy for each batch of data. \\
\hline
\texttt{num\_minibatches} & 4 & The number of mini-batches to split the rollout data into. \\
\hline
\texttt{norm\_adv} & True & If True, the advantages are normalized. \\
\hline
\texttt{clip\_coef} & 0.2 & The surrogate clipping coefficient for the PPO objective. \\
\hline
\texttt{clip\_vloss} & True & If True, a clipped loss is used for the value function. \\
\hline
\texttt{ent\_coef} & 0.01 & The coefficient for the entropy bonus in the loss function. \\
\hline
\texttt{vf\_coef} & 0.5 & The coefficient for the value function loss. \\
\hline
\texttt{max\_grad\_norm} & 0.5 & The maximum norm for gradient clipping. \\
\hline
\texttt{target\_kl} & None & The target KL divergence threshold for early stopping. \\
\hline
\end{tabular}
\label{tab:PSRO_hyper}
\end{table*}

\begin{table*}[htbp]
\centering
\caption{Hyperparameter configurations for NAL method.}
\label{tab:hyperparameters_from_call}
\begin{tabular}{|l|c|p{8.5cm}|}
\hline
\textbf{Hyperparameter} & \textbf{Value} & \textbf{Description} \\
\hline
\texttt{seed} & 3407 & The random seed for ensuring reproducibility. \\
\hline
\texttt{total\_epochs} & 50,000 & The total number of training epochs. \\
\hline
\texttt{episodes\_num} & 100 & The number of episodes collected in each iteration or batch. \\
\hline
\texttt{print\_fq} & 100 & The frequency (in epochs) at which training progress is printed. \\
\hline
\texttt{initial\_lr} & 1e-4 & The initial learning rate for the optimizer. \\
\hline
\texttt{initial\_tau} & 0.1 & The initial temperature coefficient for entropy regularization. \\
\hline
\texttt{weight\_lr} & 0.9 & The multiplicative decay factor for the learning rate. \\
\hline
\texttt{weight\_tau} & 0.9 & The multiplicative decay factor for the temperature \texttt{tau}. \\
\hline
\texttt{update\_percentage} & 0.1 & The interval for decaying LR and \texttt{tau}, as a percentage of total epochs. \\
\hline
\texttt{importance\_sampling} & True & A flag to enable importance sampling in the policy update. \\
\hline
\end{tabular}
\label{tab:NAL_hyper}
\end{table*}

\subsection{Large-Scale Game Discussions}
\label{appendix:large_discussion}

\paragraph{Evaluation in Large-Scale Games} In the large-scale games, the attacker's action space is too large to be feasibly enumerated, which makes the calculation of the duality gap impractical. To address this challenge, we adopt an alternative evaluation methodology. After the training phase for each algorithm is complete, we assess the quality of the trained policies by simulating their performance against one another. Specifically, for each matchup between an attacker policy and a defender policy, we conduct $1000$ game rollouts by sampling actions from their respective strategy distributions. The attacker's win rate is then calculated and recorded in a performance matrix, as illustrated in Figure~\ref{fig:large_exp}. The training time for PSRO is approximately two weeks, compared to four days for TSO. This difference is mainly because PSRO requires sampling a larger number of attacker actions than TSO. As sampling attacker actions is time-consuming, PSRO consequently needs a longer training period than TSO. In addition to PSRO and TSO, our comparison incorporates two baseline strategies, Uniform TSO and Uniform PSRO, which randomly choose actions from the action space and serve as benchmarks for untrained performance.

\begin{figure}[htbp]
    \centering
    \includegraphics[width=0.7\columnwidth]{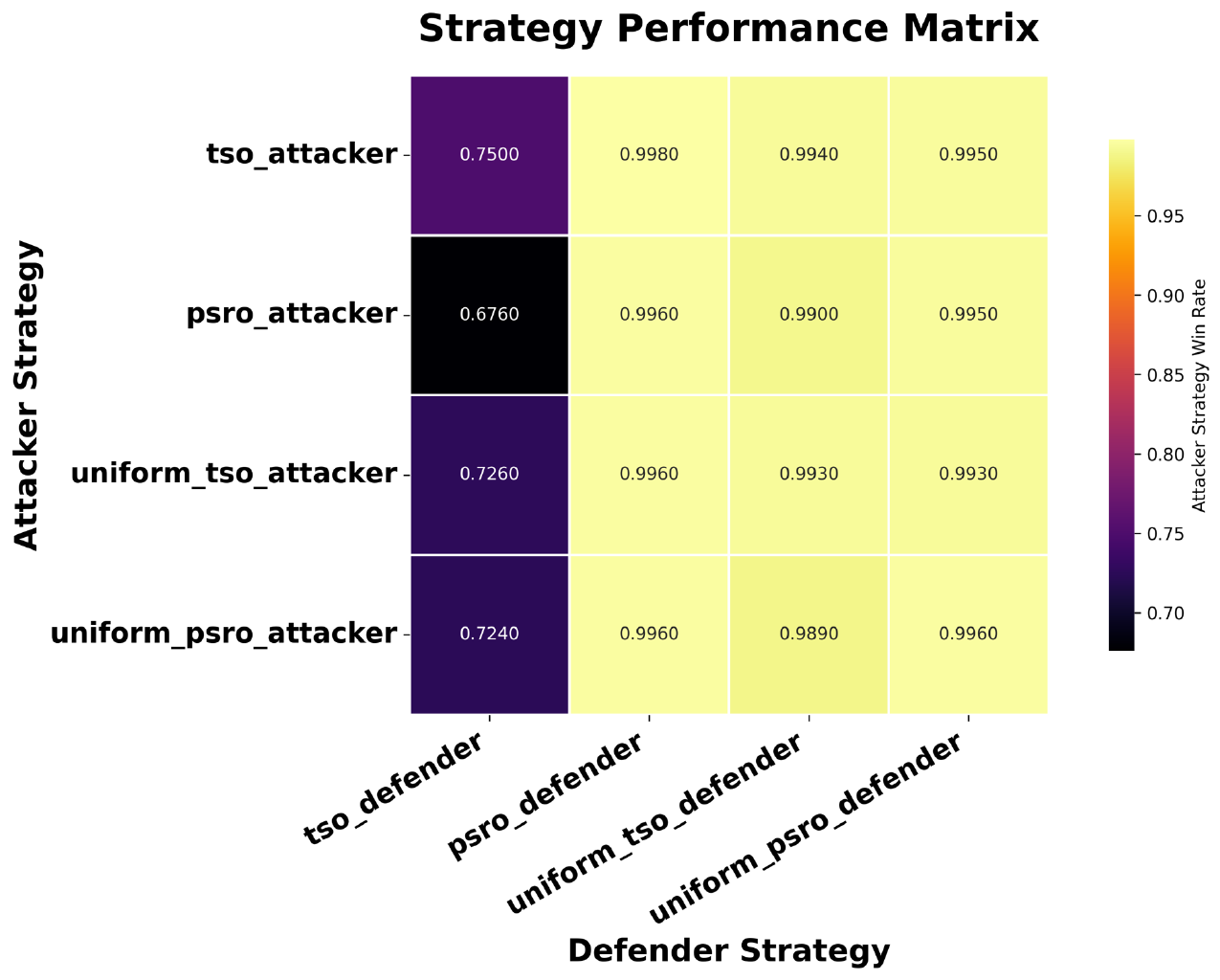}
    \caption{\small The Large-Scale Game Experiment Result}
    % \vspace{-15pt}
    \label{fig:large_exp}
\end{figure}

\paragraph{Results and Analysis} The strategy performance matrix in Figure~\ref{fig:large_exp} provides an overview of the relative strengths of the policies. In this matrix:
Each row represents a fixed attacker strategy competing against different defender strategies. A lower attacker win rate within a row indicates a more effective defender policy. Each column represents a fixed defender strategy facing different attacker strategies. A higher attacker win rate within a column signifies a more capable attacker policy.

The attacker policy obtained via the TSO method demonstrates the highest strength, achieving the highest win rate in each column. The only exception is in the last column, where the TSO attacker’s win rate is just 0.1\% lower than the highest win rate in that column, where the win rate of all attacker strategies is almost 100\%. The defender policy trained using the TSO method is the strongest, as the attacker’s win rate is the lowest against the TSO defender in every row. In contrast, the PSRO framework performs similarly to the uniform baselines, i.e., the strategy learned by the PSRO is not effective. More specifically, the performance of both the psro\_attacker and psro\_defender closely matches that of their uniform counterparts.

In conclusion, the large-scale experiment validates the effectiveness of our TSO method. By producing both the strongest attacker and defender policies, TSO demonstrates a clear and significant advantage over the PSRO framework, which fails in large-scale UNSGs.

\subsection{The number of defenders Impact}
\label{appendix:defender_number}

We test the impact of the number of defenders on the performance of TSO in medium-sized UNSGs. We set up four scenarios with 2, 3, 4, and 5 defenders, respectively. Each defender can choose from 10 possible locations. The evader has 4 exit points and a maximum path length of 7. The duality gap for these scenarios are presented in Table \ref{tab:duality_gap_defenders}. As the number of defenders increases, the duality gap does not change much. This result demonstrates TSO's scalability and effectiveness in handling larger defender teams in UNSGs scenarios.

\begin{table}[ht]
\centering
\caption{Duality gap vs. defender number}
\begin{tabular}{ccccc}
\toprule
Defender number & 2 & 3 & 4 & 5 \\
\midrule
Duality gap & 0.038488 & 0.033612 & 0.031929 & 0.031548 \\
\bottomrule
\end{tabular}
\label{tab:duality_gap_defenders}
\end{table}

\subsection{Hyperparameter Analysis}
\label{sec:hyperparameter_analysis}

We conduct a grid search over three key hyperparameters in TSO: the entropy regularization coefficient \(\tau\), the update rate for \(\tau\) decay (update\_rate), and the weight for \(\tau\) decay (weight\_tau). The experiment is conducted on the \textbf{S-1} graph. The search results are summarized in Table \ref{tab:min_duality_gap_combo}.

The global minimum duality gap is 0.02822 at \( (\tau,\ \text{update\_rate},\ \text{weight\_tau})=(0.1,\ 0.1,\ 0.5) \). Among the three hyperparameters, update\_rate shows the strongest and most consistent effect. Averaged over all other settings, the mean duality gap rises as update\_rate shrinks: approximately \(0.073\) at 0.1, \(0.103\) at 0.05, and \(0.178\) at 0.025. Within each \(\tau\), the best results occur at update\_rate \(= 0.1\) (for example, at \(\tau=0.1\): 0.07584 \(\to\) 0.11268 \(\to\) 0.13214 as update\_rate decreases from 0.1 to 0.05 to 0.025).

For \(\tau\), a clear better selection emerges at the middle value. Averaged across the grid, \(\tau=0.1\) yields the lowest mean gap (\(\sim 0.098\)) compared with \(\tau=0.05\) (\(\sim 0.114\)) and \(\tau=0.2\) (\(\sim 0.142\)). This ordering also appears when holding update\_rate at 0.1 and sweeping weight\_tau: at update\_rate \(= 0.1\), the best value for each weight\_tau occurs at \(\tau=0.1\) (e.g., for weight\_tau \(= 0.5\): 0.06026 at \(\tau=0.2\), 0.02822 at \(\tau=0.1\), 0.04979 at \(\tau=0.05\)).

The effect of weight\_tau is clearly adverse at the high end: weight\_tau \(= 0.9\) is uniformly worse on average (mean \(\sim 0.151\)) than 0.7 (\(\sim 0.098\)) or 0.5 (\(\sim 0.105\)). When update\_rate is high (0.1), decreasing weight\_tau consistently helps (e.g., at \(\tau=0.1\): 0.07584 \(\to\) 0.04283 \(\to\) 0.02822 as weight\_tau goes 0.9 \(\to\) 0.7 \(\to\) 0.5). However, interactions matter when the update rate is small: at update\_rate \(= 0.025\), the optimum often shifts to weight\_tau \(= 0.7\) rather than 0.5 (e.g., \(\tau=0.05\): 0.18146 at 0.9, 0.15762 at 0.7, but a worse 0.21149 at 0.5). In short, lower weight\_tau pays off when you keep update\_rate large, while weight\_tau \(= 0.7\) is more robust if update\_rate must be reduced.

From Figure \ref{fig:batchnum_duality}, we observe that the duality gap consistently decreases as the batch\_num increases from 16 to 256. The gap drops from 0.1463 at batch\_num = 16 to 0.0481 at batch\_num = 256, indicating that larger batch sizes yield better optimization quality under the same hyperparameter setting.

Putting these pieces together, the results support the following tuning intuition: prioritize a large update\_rate (0.1) and a mid-range \(\tau\) (0.1); within that regime, a smaller weight\_decay (0.5) gives the best gaps. What's more, increasing batch\_num consistently improves performance.

% filepath: overleaf-workshop://www.overleaf.com/Tree-Based%20Stochastic%20Optimization%20for%20Solving%20Large-Scale%20Urban%20Network%20Security%20Games/aaai_main.tex?user%3D614f3399914d0c422885bbaa%26project%3D67dd04279f8f8878f1993a6f
\begin{figure}[ht]
\centering
\begin{minipage}[c]{0.48\textwidth}
\centering
\begin{tabular}{lc}
\toprule
$(\tau,\ \text{update\_rate},\ \text{weight\_tau})$ & Duality gap \\
\midrule
(0.2, 0.1, 0.9)   & 0.18217  \\
(0.2, 0.1, 0.7)   & 0.07320 \\
(0.2, 0.1, 0.5)   & 0.06026 \\
(0.2, 0.05, 0.9)  & 0.16915  \\
(0.2, 0.05, 0.7)  & 0.08154 \\
(0.2, 0.05, 0.5)  & 0.10309  \\
(0.2, 0.025, 0.9) & 0.30242  \\
(0.2, 0.025, 0.7) & 0.17214  \\
(0.2, 0.025, 0.5) & 0.13829  \\
(0.1, 0.1, 0.9)   & 0.07584 \\
(0.1, 0.1, 0.7)   & 0.04283 \\
(0.1, 0.1, 0.5)   & 0.02822 \\
(0.1, 0.05, 0.9)  & 0.11268  \\
(0.1, 0.05, 0.7)  & 0.08563  \\
(0.1, 0.05, 0.5)  & 0.09321 \\
(0.1, 0.025, 0.9) & 0.13214  \\
(0.1, 0.025, 0.7) & 0.12634  \\
(0.1, 0.025, 0.5) & 0.18137  \\
(0.05, 0.1, 0.9)  & 0.09387  \\
(0.05, 0.1, 0.7)  & 0.05046 \\
(0.05, 0.1, 0.5)  & 0.04979 \\
(0.05, 0.05, 0.9) & 0.11249  \\
(0.05, 0.05, 0.7) & 0.09111 \\
(0.05, 0.05, 0.5) & 0.08056 \\
(0.05, 0.025, 0.9)& 0.18146  \\
(0.05, 0.025, 0.7)& 0.15762  \\
(0.05, 0.025, 0.5)& 0.21149  \\
\bottomrule
\end{tabular}
\captionof{table}{\small Duality gap under different hyperparameter combinations.}
\label{tab:min_duality_gap_combo}
\end{minipage}%
\hfill
\begin{minipage}[c]{0.48\textwidth}
\centering
\includegraphics[width=\textwidth]{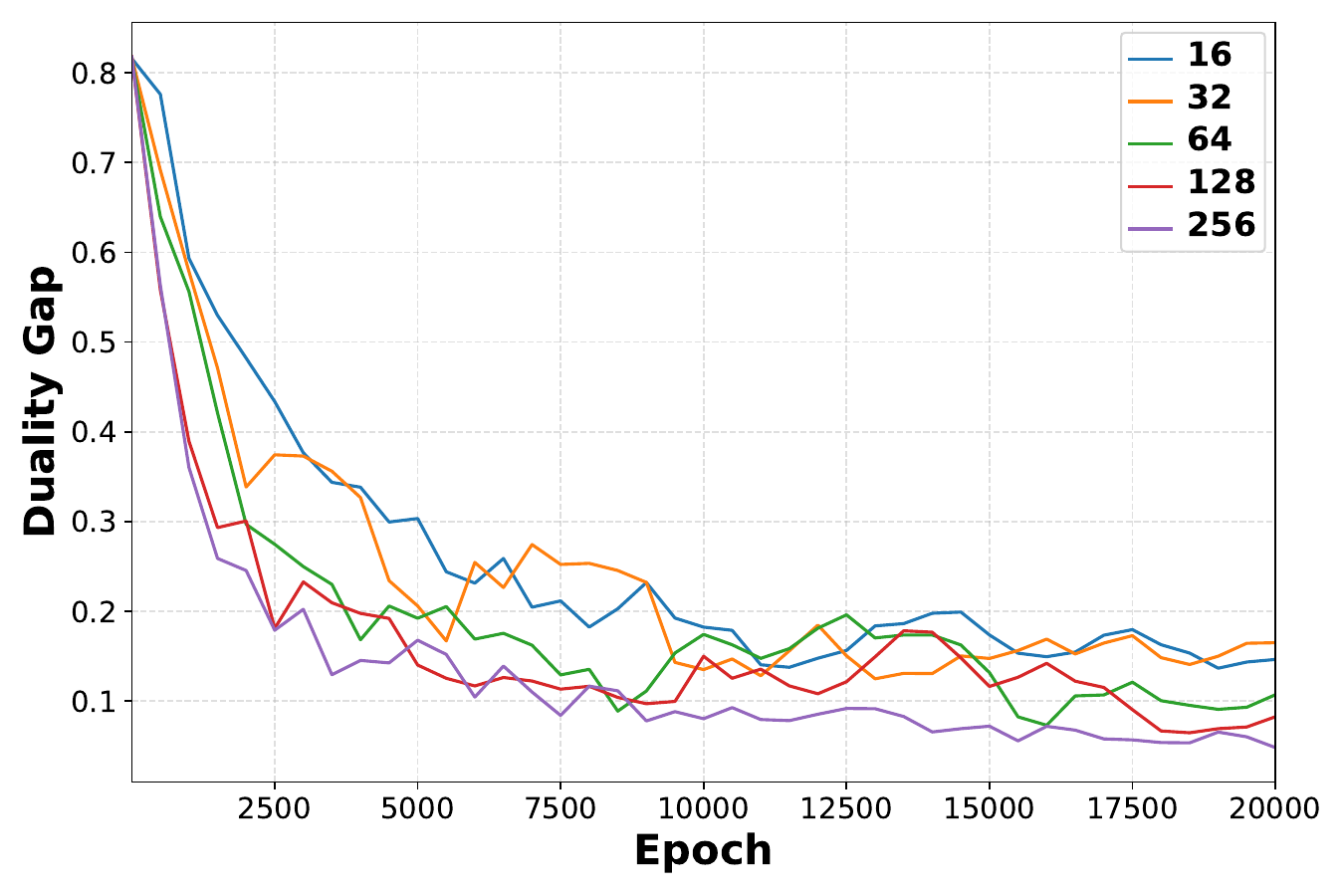}
\caption{\small Duality gap under different batch\_num with other hyperparameters kept the same.}
\label{fig:batchnum_duality}
\end{minipage}
% \vspace{-15pt}
\end{figure}

\subsection{Wall-Clock Time Analysis of TSO and PSRO}
\label{sec:time_comparison}

For both PSRO and TSO, we control the total number of sampled episodes to be the same over the entire training process. The game settings are based on \textbf{S-1}. Specifically, for PSRO we set the product of \texttt{train\_evader\_number} and the number of PSRO iterations \texttt{num\_psro\_iteration} to match the product of \texttt{batch\_num} and \texttt{total\_epoch} used in TSO, so that the total number of sampling episodes is comparable across the two methods. The results are summarized in Table \ref{tab:time_analysis}.

The row labeled \emph{training time} reports the wall-clock training for each sampling budget, where in each pair the first value corresponds to TSO and the second to PSRO. The row labeled \emph{duality gap} reports the corresponding duality gaps achieved under the same sampling budgets, again with the first value for TSO and the second for PSRO. From the table, we observe that, for the same number of sampled episodes, TSO consistently requires less wall-clock time than PSRO and achieves a substantially smaller duality gap. This indicates that TSO is more efficient than PSRO both in terms of computational cost and solution quality under matched sampling budgets.

\begin{table}[htbp]
\centering
\caption{Wall-Clock time and duality gap of TSO and PSRO under Matched Sampling Budgets}
\begin{tabular}{cccccc}
\toprule
Sample episodes & $4\times10^5$ & $8\times10^5$ & $1.2\times10^6$ & $1.6\times10^6$ & $2\times10^6$ \\
\midrule
Training time (min)   & (4, 17)           & (8, 35)           & (12, 53)           & (16, 70)          & (19, 88)         \\
Duality gap     & (0.1682, 0.7633)   & (0.1353, 0.5173)   & (0.1211, 0.4897)   & (0.0730, 0.4849)   & (0.0699, 0.4758)   \\
\bottomrule
\end{tabular}
\label{tab:time_analysis}
\end{table}

% \subsection{Discussions about baselines and convergence analysis}

% We selected SOTA baselines: NAL (stochastic optimization) and PSRO. NSGZero/NFSP (extensive-form) and EPSRO (requires action enumeration) are incompatible with our normal-form, non-enumerable setting.

% We do not provide the convergence proof, as noted by Meng’25, because it depends on the chosen stochastic-optimization method and is beyond our scope. Convergence for non-convex first-order methods is an open optimization problem, not a game-theoretic one.

% We do not construct the complete tree; sampling considers only current-level candidates, yielding $O(d|E|)$ ($d$: max out-degree; $|E|$: simple-path length bound).

% \paragraph{Baselines}
% We select two SOTA baselines that cover the dominant paradigms for normal-form games: NAL (stochastic optimization) and PSRO (oracle-based). Methods such as NSGZero/NFSP (extensive-form) and EPSRO (requires explicit action enumeration) are incompatible with our normal-form, non-enumerable setting.

% \paragraph{Time/Space Complexity and Convergence}
% Our method does not construct the complete action tree; sampling only considers current-level candidates, yielding $O(d|E|)$ time per sampled action (with $d$ the maximum out-degree, $|E|$ the simple-path length bound). We do not provide a general convergence proof: convergence depends on the chosen stochastic-optimization routine. For non-convex first-order methods, global guarantees remain an open optimization problem rather than a game-theoretic one.

\end{document}